\theoremstyle{plain}
\newcommand{\methodname}{Influence Distillation}
\newcommand{\expected}[1]{\mathop{\mathbb{E}[#1]}}
\newtheorem{theorem}{Theorem}[section]
\newtheorem{lemma}[theorem]{Lemma}
\newtheorem{corollary}[theorem]{Corollary}
\theoremstyle{definition}
\theoremstyle{remark}
\DeclareMathOperator*{\argmin}{arg\,min}
\newcommand{\blue}[1]{#1}
\newcommand\numberthis{\addtocounter{equation}{1}\tag{\theequation}}
\newcommand{\matr}[1]{\mathbf{#1}}
\newcommand{\vect}[1]{\mathbf{#1}}
\newcommand{\dS}{S}
\newcommand{\dT}{T}
\newcommand{\dD}{D}
\newcommand{\dV}{V}
\newcommand{\dL}{L}
\newcommand{\distS}{\mathcal{S}}
\newcommand{\distT}{\mathcal{T}}
\newcommand{\mech}{\mathcal{M}}
\newcommand{\loss}{\mathcal{L}}
\newcommand{\vtheta}{\boldsymbol{\theta}}
\newcommand{\vdelta}{\boldsymbol{\delta}}
\newcommand{\vw}{\boldsymbol{w}}
\newcommand{\vp}{\vect{p}}
\newcommand{\va}{\vect{a}}
\newcommand{\vb}{\vect{b}}
\newcommand{\vm}{\vect{m}}
\newcommand{\vg}{\vect{g}}
\newcommand{\vh}{\vect{h}}
\newcommand{\vv}{\vect{v}}
\newcommand{\vx}{\vect{x}}
\newcommand{\vy}{\vect{y}}
\newcommand{\ve}{\vect{e}}
\newcommand{\vt}{\vect{t}}
\newcommand{\vone}{\vect{\mathbb{1}}}
\newcommand{\vzero}{\vect{0}}
\newcommand{\valpha}{\boldsymbol{\alpha}}
\newcommand{\mG}{\matr{G}}
\newcommand{\mH}{\matr{H}}
\newcommand{\mQ}{\matr{Q}}
\newcommand{\mB}{\matr{B}}
\newcommand{\mC}{\matr{C}}
\newcommand{\mE}{\matr{E}}
\newcommand{\mI}{\matr{I}}
\newcommand{\mR}{\matr{R}}
\renewcommand{\paragraph}[1]{\noindent\textbf{#1}}
\newcommand{\Roned}[1]{\mathbb{R}^{#1}}
\newcommand{\Rtwod}[2]{\mathbb{R}^{#1 \times #2}}
\title{Efficient Data Selection at Scale \\ via Influence Distillation}
\author{%
  Mahdi Nikdan\thanks{Work done while an intern at Google Research.}~~\thanks{Correspondence to mnikdan@ista.ac.at and cohenaddad@google.com.}\\
  ISTA \& Google Research\\
  \And
  Vincent Cohen-Addad\footnotemark[2] \\
  Google Research \\
  \AND
  \quad Dan Alistarh\quad\quad~ \\
  \quad ISTA \& Red Hat AI\quad\quad~ \\
  \And
  ~Vahab Mirrokni\quad~ \\
  ~Google Research\quad~ \\
}
\begin{document}

\maketitle

\begin{abstract}
Effective data selection is critical for efficient training of modern Large Language Models (LLMs). This paper introduces \methodname{}, a novel, mathematically-justified framework for data selection that employs second-order information to optimally weight training samples. By distilling each sample's influence on a target distribution, our method assigns model-specific weights that are used to select training data for LLM fine-tuning, guiding it toward strong performance on the target domain. We derive these optimal weights for both Gradient Descent and Adam optimizers. To ensure scalability and reduce computational cost, we propose a \emph{landmark-based approximation}: influence is precisely computed for a small subset of ``landmark'' samples and then efficiently propagated to all other samples to determine their weights. We validate \methodname{} by applying it to instruction tuning on the Tulu V2 dataset, targeting a range of tasks including GSM8k, SQuAD, and MMLU, across several models from the Llama and Qwen families. Experiments show that \methodname{} matches or outperforms state-of-the-art performance while achieving up to $3.5\times$ faster selection.
\end{abstract}

\section{Introduction}

The rise of Large Language Models (LLMs) has driven significant advances in natural language processing; yet, training and fine-tuning these models requires massive computational resources and carefully-curated datasets. One key direction towards improved training efficiency has been via \emph{data selection and data weighting methods}~\citep{less, yin2024compute, antonello2020selecting, marion2023less, ankner2024perplexed, li2023quantity, ivison2025large, DBLP:conf/icml/AxiotisCHJMSWW24, dsir, dsdm, huang2024dynimpt}, which aim to curate training subsets that maximize a model's effectiveness, often with respect to a particular \emph{target data distribution} or downstream task. However, existing approaches typically rely on heuristics---such as perplexity-based filtering---or require expensive proxy model training or expensive embedding functions to generate data representations.

More precisely, existing methods face several limitations. First, many existing methods utilize fixed, model-agnostic features or representations (e.g., static embeddings) that may not capture the full relationship between training samples and the target distribution \citep{yin2024compute, antonello2020selecting, marion2023less, ankner2024perplexed}. 
Second, methods that update weights during training lack theoretical justification and can be unstable \citep{dsir, huang2024dynimpt}.
Lastly, approaches that rely on reference model training or costly embeddings are computationally intensive and often challenging to scale \citep{li2023quantity, less, ivison2025large}. Thus, there remains a clear need for a mathematically-grounded, efficient, and scalable framework for data selection that directly optimizes for performance on a specific target distribution.

\begin{wrapfigure}{t}{0.5\textwidth}
  \centering
  \includegraphics[width=0.99\linewidth]{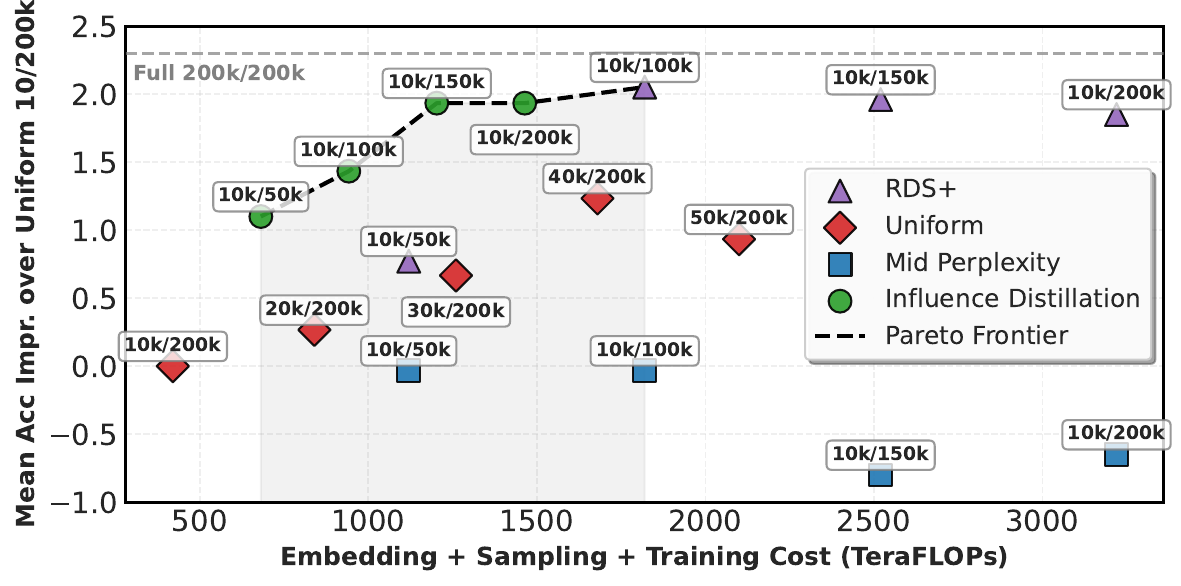}
  \caption{Average improvement over uniform sampling across six tasks vs. runtime. The model used is Llama2-7B~\citep{llama2}, and the training dataset is Tulu V2~\citep{tulu2}. The annotation ``M/N'' indicates that the method selected M samples from a pool of size N. Further details are provided in Section~\ref{sec:exp}.}
  \label{fig:pareto}
\end{wrapfigure}

\paragraph{Contribution.} We introduce \emph{\methodname{}}, a novel framework for data selection that addresses these challenges. Given a pre-trained model and a target task (represented by a small target dataset), \methodname{} formulates the influence of training samples on the target distribution's loss via a second-order approximation. \methodname{} directly optimizes sample weights by analyzing how each training sample, if included in a gradient step, is expected to affect model performance on the target data.
This formulation leads to a quadratic optimization objective for the sample weights, which we demonstrate can be solved efficiently. We provide derivations for these optimal weights under both standard Gradient Descent and the adaptive Adam optimizer, backed by theoretical justifications.

To ensure scalability to large datasets, we further introduce an efficient \emph{landmark-based} approximation. This approach first involves selecting a small subset of ``landmark'' samples and precisely computing their influence. The influence for all other samples is then efficiently approximated by transferring the computed influence from these landmarks. This transfer mechanism is guided by a novel and computationally inexpensive embedding space derived from Jacobian-vector Products. This significantly reduces the computational overhead of gradient computations for the entire dataset.

We validate \methodname{} via comprehensive instruction tuning experiments using standard open LLMs (from the Llama \citep{llama2, llama3} and Qwen \citep{qwen2.5} families) on the Tulu V2 \citep{tulu2} training dataset, while targeting advanced downstream tasks like MMLU \citep{mmlu1, mmlu2}, mathematics and code. Our results demonstrate that \methodname{} not only substantially outperforms uniform random selection but also in most cases matches or exceeds the performance of state-of-the-art data selection methods, while offering significant computational speedups on the same selection problem—up to 3.5$\times$ in embedding + selection runtime. This positions \methodname{} as a strong method on the Pareto frontier of overall embedding, selection and training cost versus downstream task accuracy (see Figure \ref{fig:pareto}).

\section{Related Work}

Data selection (`pruning') and weighting methods have become increasingly important in the context of efficient LLM training. In a celebrated paper,~\citet{sorscher2022beyond} 
et al. showed that (model-agnostic) 
data pruning, and in 
particular deduplication, helps go beyond 
scaling laws for LLMs. This was later further
improved by~\citet{abbas2023semdedup}.

Early work on model-dependent data pruning focused on heuristics like perplexity-based filtering and confidence-based selection:~\citet{marion2023less} found that selecting examples with moderate perplexity scores often outperforms training on the full dataset or examples selected by other metrics.  
\citet{do2019cross} introduced DSIR, which uses importance resampling based on n-gram features to select relevant training examples, with promising results on 
 mathematical reasoning and clinical text summarization. Similarly,~\citet{xie2023smalltolarge} proposed clustering loss trajectories to identify representative training examples, though their approach focused more on general domain adaptation rather than specific target distributions. 
 Another approach, so-called Classifier, was introduced by~\citet{brown2020language} and has been employed in subsequent work (\citet{gao2020pile,chowdhery2023palm,du2022glam}.  Other strategies include selecting examples that maximize the loss difference between LMs trained on candidate and reference datasets (\citet{moore2010intelligent,axelrod2017cynical,feng2022automatic}).  Simpler, yet common, techniques involve filtering documents based on length or the presence of excessive special characters (\citet{raffel2020exploring,xie2023smalltolarge}).  A related, though distinct, task in the LM domain is optimizing the weights for sampling from mixed data sources (\citet{chen2024skill,albalak2023efficient}). Recently, \citet{ivison2025large} proposed RDS+, which uses similarity between model-dependent embeddings computed by a position-weighted mean pool of the last hidden layer states.

Recent work has also highlighted the importance of considering the training dynamics when selecting data. \citet{zhou2023lobass} proposed measuring ``learnability'' based on loss changes during training, while~\citet{swayamdipta2020dataset} introduced ``dataset cartography'' to analyze training dynamics across examples. These methods provide useful signals about which examples are most valuable for training; at the same time, they require training reference models which can be computationally expensive.
For large-scale applications,~\citet{bhatt2024experimental} evaluated various data selection approaches for LLM fine-tuning, and found that facility-location selection based on hidden representations was particularly effective. However,~\citet{DBLP:conf/nips/TirumalaSAM23} observed that generating these representations for large datasets remains computationally challenging. 
More recently, \citet{DBLP:conf/icml/EngstromFM24} framed the data 
selection problem as an optimization problem: Given the learning algorithm, find the subset of the data that maximizes the performance
of the trained model. To obtain an efficient solution, they design a 
model that given a subset of the training data $S$ and a target example $t$, predicts the loss of the model trained on $S$ on $t$. 
\citet{DBLP:conf/icml/AxiotisCHJMSWW24} recently use coreset-related ideas to propose a 
computationally efficient way of sampling an unbiased estimator of the
model loss from the training data so as to train on a smaller input.

While previous methods like DSIR and facility location selection rely on fixed features or representations, our method directly optimizes sample weights based on their influence on the    target distribution through a second-order approximation. Importantly, this does not 
require training proxy model to predict the value of the elements
and is computed directly from the input, model and learning algorithm.
Unlike curriculum learning or confidence-based approaches that update weights during training, we derive optimal weights analytically for both SGD and Adam optimizers.
In contrast to methods that require training reference models, our landmark-based approximation allows efficient weight computation without extensive pre-training.

There is a large body of work on data selection methods for other 
learning tasks and mode, and it is beyond the scope of this paper to 
provide a detailed overview. We refer the reader to~\citet{kaushal2019learning,killamsetty2021retrieve,wei2015submodularity,chen2023alpagasus,cao2023instruction,sener2017active} and references therein.

\section{Method}
\label{sec:main-method}
\subsection{Problem and Notation}
Let $\vtheta \mathbin{\in} \Roned{d}$ be the model parameters. For any dataset $\dD$ of size $n$ and any vector of sample weights $\vw\mathbin{=}[w_1, w_2, ..., w_n]^T$, denote $\loss(\vtheta;\dD,\vw) \mathbin{=} \frac{1}{n} \sum_{i=1}^n w_i~\ell(\vtheta;\dD_i)$ as the weighted average of the model loss $\ell$ on the samples of dataset $\dD$ at point $\vtheta$. Additionally, define $\mech(\vtheta;\dD, \vw)$ as a training mechanism that returns the parameters after being trained on a dataset $\dD$ weighted by $\vw$. Unless otherwise stated, we will assume $\mech$ is simply one step of (full) gradient descent. 

Let $\distS$ and $\distT$ represent the training (source) and downstream (target) distributions, respectively. Assume we have access to a dataset $\dS$ sampled from $\distS$ and a small representative dataset $\dT$ from $\distT$. Our high-level goal will be to determine sample weights $\vw^*$ such that:
\begin{equation}
    \vw^* = \argmin_{\vw}~\loss \big(\mech(\vtheta; \dS, \vw);\dT, \vone\big)
    \label{eq:default-obj}
\end{equation}
where $\vone \mathbin{\in} \Roned{|\dT|}$ represents the all-ones vector. In words, we wish to find sample weights $\vw$ for instances within the source dataset $\dS$, such that training on $\dS$ using these weights results in minimal loss on the target dataset $\dT$. Notably, this notation also allows for the special case of $\distS \mathbin{=} \distT$, where our method would find weights that maximize in-distribution loss improvement.

\subsection{A Running Example}
\label{sec:running}
Throughout this section, we utilize a toy training setting to illustrate variants of our method. Specifically, we consider a linear regression model parameterized by $\vtheta$ with the loss function $\ell(\vtheta; \vect{x}, y) = (\vtheta^T \vect{x} - y)^2$ for any $\vtheta, \vect{x} \in \Roned{d}, y \in \{0,1\}$.
For the source dataset, we sample $256$ random instances from the first two classes of the CIFAR-10 dataset \citep{cifar10} and combine them with $256$ synthetic samples generated from a Gaussian distribution with the same mean and standard deviation as the real samples. The target dataset consists of another set of $256$ samples from CIFAR-10. We use gradient descent with a learning rate of $10^{-3}$ as the optimizer. Finally, the loss values are reported on a validation dataset of size $256$, also sampled from CIFAR-10.

\begin{figure}[t] % The * makes it span both columns
    \centering
    \begin{subfigure}{0.32\textwidth} % Each image takes up 1/2 of the width
        \includegraphics[width=\linewidth]{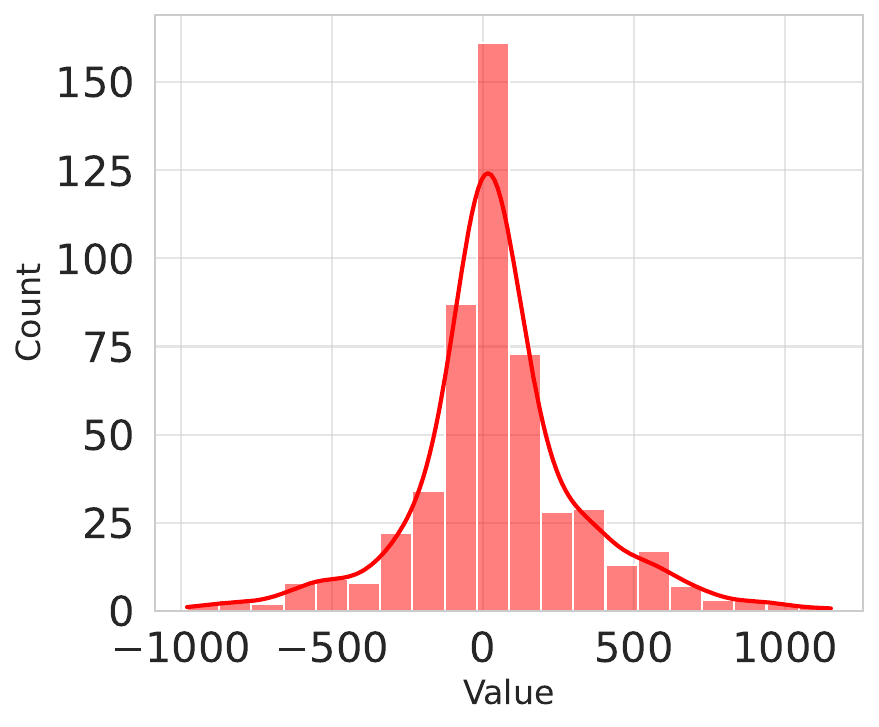}
    \end{subfigure}
    \hfill
    \begin{subfigure}{0.32\textwidth}
        \includegraphics[width=\linewidth]{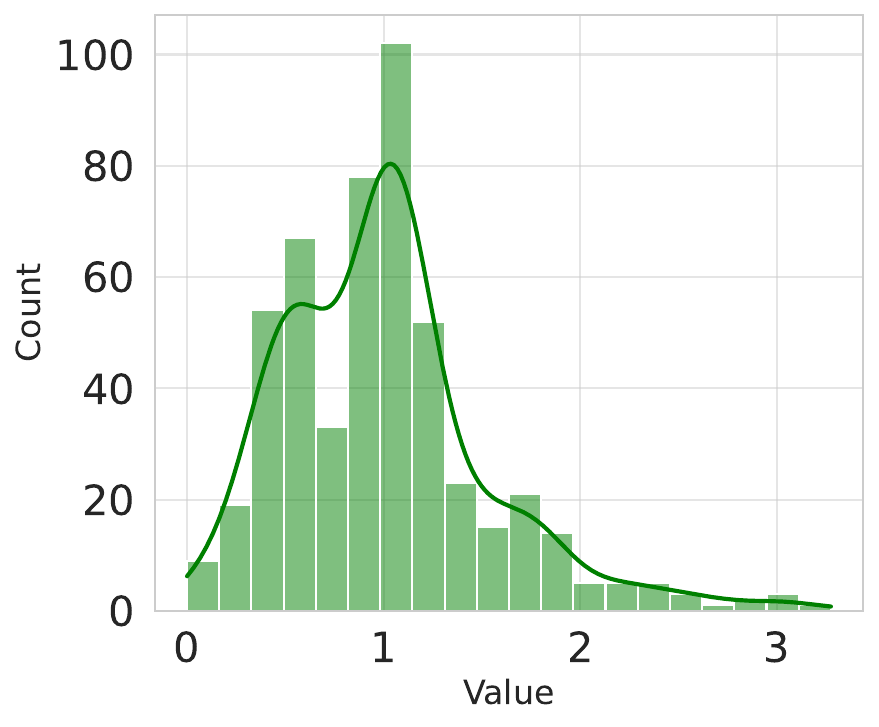}
    \end{subfigure}
    \begin{subfigure}{0.32\textwidth}
        \includegraphics[width=\linewidth]{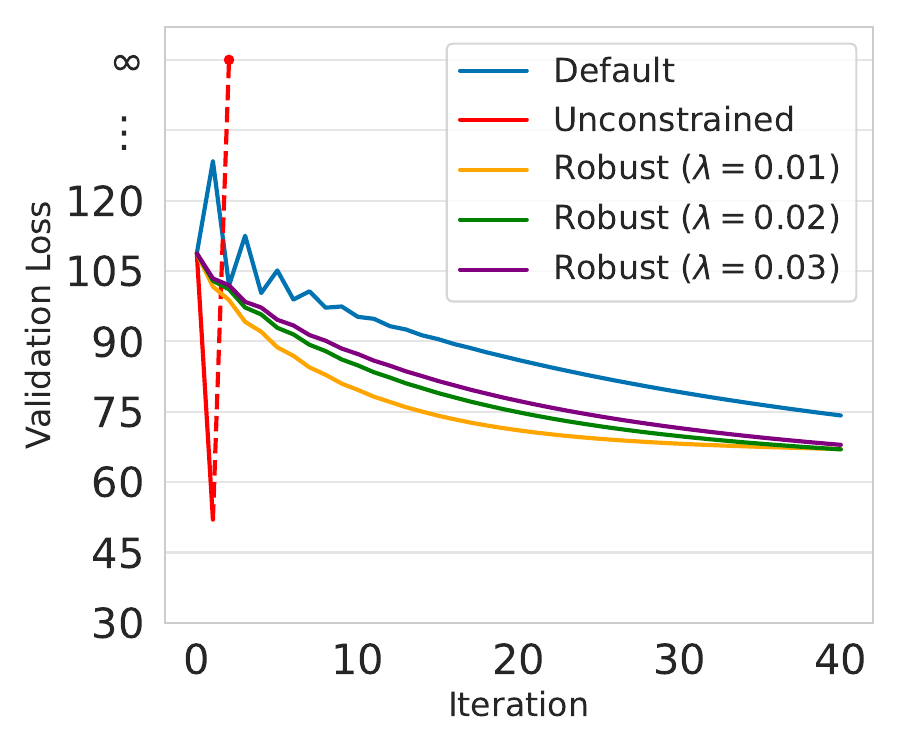}
    \end{subfigure}
    \caption{(Left) Distribution of unconstrained weights, (Middle) Distribution of robust weights for $\lambda\mathbin{=}0.02$, and (Right) validation loss during training with different variants in the running experiment setting. Robust weights are found by minimizing Objective \ref{eq:robust-obj} using the SLSQP algorithm \citep{slsqp} implemented in the SciPy library \citep{scipy}.}
    \label{fig:linear}
\end{figure}

\subsection{\methodname{}}
\paragraph{Case 1: Unconstrained Weights.}
Let $\vg_\dT(\vtheta) \mathbin{=} \nabla_{\vtheta} \loss(\vtheta;\dT,\mathbb{1})$ and $\mH_\dT(\vtheta) \mathbin{=} \nabla^2_{\vtheta} \loss(\vtheta;\dT,\mathbb{1})$ denote the gradient vector and Hessian matrix of the loss with respect to the model parameters on the target dataset. Construct $\mG_\dS(\vtheta) \in \Rtwod{|\dS|}{d}$ by stacking the gradients of the loss with respect to $\vtheta$ across samples of $\dS$. As mentioned before, assume $\mech$ is one step of gradient descent, i.e., $\mathcal{M(\vtheta;\dD}, \vw) \mathbin{=} \vtheta \mathbin{-} \eta \nabla_{\vtheta} \loss(\vtheta;\dD,\vw) \mathbin{=} \vtheta \mathbin{-} \frac{\eta}{|\dS|} \mG_\dS^T(\vtheta) \vw$, where $\eta$ denotes the learning rate.
We estimate Objective \ref{eq:default-obj} by:
\begin{align*}
    \vw^* &= \argmin_{\vw}~\loss \big(\mech(\vtheta; \dS, \vw);\dT, \mathbb{1}\big) \\
    &= \argmin_{\vw}~\loss (\vtheta - \frac{\eta}{|\dS|} \mG_\dS^T(\vtheta) \vw;\dT, \mathbb{1}) \\
    &\approx \argmin_{\vw}~[\loss (\vtheta;\dT, \mathbb{1}) - \frac{\eta}{|\dS|} \vg_\dT^T(\vtheta) \mG_\dS^T(\vtheta) \vw + \frac{\eta^2}{2~|\dS|^2} \vw^T \mG_\dS(\vtheta) \mH_\dT(\vtheta) \mG_\dS^T(\vtheta) \vw ] \\
    &= \argmin_{\vw}~ [ - \vg_\dT^T(\vtheta) \mG_\dS^T(\vtheta) \vw + \frac{\eta}{2~|\dS|} \vw^T \mG_\dS(\vtheta) \mH_\dT(\vtheta) \mG_\dS^T(\vtheta) \vw ] \numberthis \label{eq:taylor}
\end{align*}
where the approximation comes from a second-order Taylor expansion, i.e., $\loss(\vtheta+\vdelta;\dT,\vone)\mathbin{\approx}\loss(\vtheta;\dT,\vone) \mathbin{+} \vg_\dT^T(\vtheta)\vdelta \mathbin{+} \frac{1}{2}\vdelta^T \mH_\dT(\vtheta)\vdelta$ where $\vdelta$ is replaced with $-\frac{\eta}{|\dS|} \mG_\dS^T(\vtheta)\vw$.

Next, we define two key objects, $\vp \mathbin{\in} \Roned{|\dS|}$ and $\mQ \mathbin{\in} \Rtwod{|\dS|}{|\dS|}$, as follows:
\begin{equation}
\vp(\vtheta; \dS, \dT) = \mG_\dS(\vtheta) \vg_\dT(\vtheta),
\label{eq:p}
\end{equation}
\begin{equation}
\mQ(\vtheta; \dS, \dT) = \frac{1}{|\dS|} \mG_\dS(\vtheta) \mH_\dT(\vtheta) \mG_\dS^T(\vtheta),
\label{eq:Q}
\end{equation}
For brevity, unless stated otherwise, we will omit $\dS$ and $\dT$ from the arguments of $\vp$ and $\mQ$. 
Let 
\begin{equation}
\label{eq:f}
f(\vw;\vtheta) \mathbin{=} -\vp(\vtheta)^T \vw + \frac{\eta}{2} \vw^T \mQ(\vtheta) \vw.
\end{equation}
Then, the objective in Equation~\ref{eq:taylor} becomes
\begin{equation}
\vw^* = \argmin_{\vw}~ f(\vw;\vtheta).
\label{eq:local-obj}
\end{equation}
In words, $f$ represents a scaled approximation of the change in loss on $\dT$ when the model at point $\vtheta$ is trained on $\dS$ with weights $\vw$. It is a quadratic function in $\vw$, as $\vp$ and $\mQ$ do not depend on $\vw$. This objective can be minimized in closed form as $\vw^* \mathbin{=} \frac{1}{\eta} \mQ(\vtheta)^{-1} \vp(\vtheta)$.

\paragraph{Discussion.} While simple, the proposed solution has several crucial limitations: (a) it may produce negative or highly irregular sample weights, such as excessively large values, which lack intuitive interpretation, (b) the weights may overfit to the current set of parameters $\vtheta$, and (c) the weights may also overfit to the target dataset.
The first two issues can be easily observed in our running experiment. The irregularity of the weights is illustrated in Figure~\ref{fig:linear} (left). Furthermore, Figure~\ref{fig:linear} (right) demonstrates that unconstrained weights become invalidated after just one step of training, suggesting that the weights ``overfit'' to the current model parameters $\vtheta$. We note that, since the model in our running example is linear, the second-order approximation is exact—thus, the first update step reaches the optimum on $\dT$. However, this behavior does not generalize to non-linear models.

\paragraph{Case 2: Robust Weights.}
We modify Objective \ref{eq:local-obj} to address the above limitations. First, we restrict the weights to non-negative values, i.e., $\forall~ 1 \le i \le |\dS|: \vw_i \ge 0$. Second, we require the weights to sum to the size of the source dataset, $\vw^T \vone \mathbin{=} |\dS|$. This prevents weights from becoming excessively large and ensures that rescaling the weights does not change the effective step size: using $\alpha \vw$ with learning rate $\eta$ is equivalent to using $\vw$ with learning rate $\alpha \eta$. 

To mitigate ``overfitting'', a standard approach is to add a regularization term. Indeed, Appendix \ref{apx:linear}~derives such a term for linear models. In the general case, we employ a simple L2 regularization term.

\paragraph{The Robust Influence Objective.} 
Hence, we define the robust \methodname{} objective:
\begin{equation}
\vw^* = \argmin_{\vw}~ f(\vw;\vtheta) + \frac{\lambda}{2} \|\vw\|_2^2,~~s.t.~ \left\{\begin{matrix} \vw \ge 0 \\ \vw^T \vone = |\dS| \\ \end{matrix}\right.
\label{eq:robust-obj}
\end{equation}
Refer to Section \ref{sec:tuning-lambda} for a discussion on how we tune $\lambda$ in practice.

We compute the robust weights with $\lambda \mathbin{\in} \{0.01, 0.02, 0.03\}$ in the context of our running example. Figure~\ref{fig:linear} (right) highlights the effectiveness of these robust weights, showing that all three configurations outperform the default weights while remaining stable throughout training. Additionally, Figure~\ref{fig:linear} (middle) depicts the distribution of weights for $\lambda \mathbin{=} 0.02$.

\paragraph{Adam Optimizer.}
The Adam optimizer \citep{adam} is the default choice for fine-tuning LLMs. Therefore, we tailor our method for Adam optimizers. To this end, we employ a greedy approach, where we assume the first- and second-order momentums ($\vect{m}$ and $\vect{v}$, respectively) are fixed after a warm-up. In this case, the $\mQ^{\text{Adam}}$ and $\vp^{\text{Adam}}$ objects are calculated as follows:
\begin{equation}
    \label{eq:p-adam}
    \vp^{\text{Adam}}(\vtheta) = \mG_\dS^{\text{Adam}}(\vtheta) (\vg_\dT(\vtheta) - \frac{\eta}{|\dS|}~\mH_\dT(\vtheta) \vect{b}),
\end{equation}
\begin{equation}
    \label{eq:Q-adam}
    \mQ^{\text{Adam}}(\vtheta) = \frac{1}{|\dS|} \mG_\dS^{\text{Adam}}(\vtheta) \mH_\dT(\vtheta) \mG_\dS^{\text{Adam}}(\vtheta)^T,
\end{equation}
where $\vect{b}\mathbin{=}\frac{\beta_1 \vect{m}}{(1-\beta_1^s) (\sqrt{\frac{\vect{v}}{1-\beta_2^s}} + \epsilon)}$, and $\mG_\dS^{\text{Adam}}(\vtheta)$ is constructed by element-wise multiplying every row of $\mG_\dS(\vtheta)$ by $\vect{a}\mathbin{=}\frac{1-\beta_1}{(1-\beta_1^s) (\sqrt{\frac{\vect{v}}{1-\beta_2^s}} + \epsilon)}$. Additionally, $s$ is the number of warm-up steps, and $(\beta_1, \beta_2, \epsilon)$ are Adam hyperparameters. See Appendix \ref{apx:adam-greedy}~for more details.

\paragraph{Handling Variable Lebel Lengths.}
A common practice in data selection is to normalize the gradients prior to measuring similarities \citep{less}. This is motivated by the observation that the norm of a sample's gradient tends to correlate negatively with the number of label tokens, thereby biasing unnormalized gradient-based methods toward shorter samples. Normalizing the gradients mitigates this issue and shifts the similarity measure from a dot product to cosine similarity. In our approach, we adopt this normalization as well. See Appendix \ref{apx:grad-analysis}~for a study on this correlation.

\paragraph{Per-target Importance.}  
The formulation above assigns weights to training samples based on their \textit{average} influence over the target set. However, recent work~\citep{less, ivison2025large} has shown that selecting training samples based on \textit{per-target} influence can yield better performance; that is, iterating over individual target samples repeatedly and selecting one top-scoring training sample each time. We adopt this approach, noting that influence scores for each target can be computed by running \methodname{} $|\dT|$ times—once per target sample.

\section{Efficient \methodname{}}
\label{sec:impl}
In this section, we tackle several challenges regarding the implementation of \methodname{}.

\paragraph{Cost of Hessian.}
While $\mQ(\vtheta)$ can be calculated exactly using Hessian-Vector Products (HVPs), these HVPs require storing the backward graph, which can incur extra memory costs in practice.

\paragraph{Cost of $\mG_\dS$.}
Constructing the matrix $\mG_\dS$ requires computing the gradient of the model with respect to each individual sample in the training set. This process is computationally expensive, as it incurs a cost similar to one full epoch of training on the entire dataset $\dS$. Furthermore, storing the matrix $\mG_\dS$ requires memory proportional to $|\dS|$ times the size of the model, which is intractable in practice.

\paragraph{Regularization Coefficient} The solution to Equation~\ref{eq:robust-obj} is sensitive to the choice of regularization strength $\lambda$. A key challenge, therefore, is determining how to select $\lambda$ in a practical and effective way.

\subsection{First-order \methodname{}}
\blue{Recall the definition of $f(\vtheta; \vw)$ from Equation~\ref{eq:f}, where the second-order term is scaled by the learning rate $\eta$. In Appendix \ref{apx:first-vs-second-order}, we observe that in our settings, $\eta$ is small enough that the second-order term becomes negligible. As a result, computing $\mQ$ can be avoided with little to no loss in performance. This first-order approximation aligns with prior work on gradient-based influence estimation, such as the methods proposed by \citet{less}.}

\subsection{Gradient Projection}
To reduce the cost of storing the gradients, we take a similar approach to \citet{less} and project each gradient vector upon calculation into a $k$-dimensional space, where $k \ll d$. As opposed to the mentioned work, which uses random projections sampled from the Rademacher distribution ($\pm1$ with equal probability), we find that projection using a Randomized Hadamard Transform is faster in practice. For more details on the projections, see Appendix \ref{apx:proj}.

\subsection{Landmark-Based Gradient Approximation}
\label{sec:landmark}
To circumvent the need to compute gradients for every training sample, we introduce a \textit{landmark-based} approach. At a high level, this method provides an efficient low-rank approximation of the gradient matrix $\mG_\dS$, given by $\hat{\mG}_\dS = \mC \mG_\dL$, where $\mG_\dL \in \Rtwod{\ell}{d}$ contains the gradients of $\ell \ll |\dS|$ selected \textit{landmark} samples. The matrix $\mC \in \Rtwod{|\dS|}{\ell}$ holds the coefficients that express each sample’s gradient as a linear combination of the landmark gradients.

Specifically, let $\dL \subseteq \dS$ denote a set of $\ell$ landmark samples (e.g., selected at random), and suppose we have access to low-cost per-sample embeddings, represented by $\mE_\dS \in \Rtwod{|\dS|}{e}$. As before, we assume that all embedding and gradient vectors are normalized. To compute the coefficient matrix $\mC$, we minimize the objective $\min_{\mC \in \Rtwod{n}{\ell}} ||\mE_\dS - \mC \mE_\dL||_2^2$, where $\mE_\dL \in \Rtwod{\ell}{e}$ contains the embeddings of the landmark samples. In words, this procedure approximates each sample’s embedding as a linear combination of landmark embeddings. We then estimate the $i$-th row of the gradient matrix, $\vg_i$, by $\hat{\vg}_i = \mG_\dL^T \vect{c}_i$, where $\vect{c}_i$ is the $i$-th row of $\mC$. This approximation implicitly assumes that the linear relationships learned in the embedding space transfer to the gradient space.

\paragraph{Theoretical Justification.}
Although this approximation is not expected to recover the true gradients with high accuracy, the key intuition is that, as long as it is unbiased, even a weak recovery can yield similar per-sample weights in high-dimensional spaces. Theorem \ref{cla:approx-grads} demonstrates this property for the specific case of the first-order variant of \methodname{}.
\begin{theorem}
    \label{cla:approx-grads}
    \blue{(Informal version of Theorem \ref{thm:approx-grad} and Corollary \ref{cor:approx-grads} tailored to landmark-based approximation -- see Appendix \ref{apx:cluster-bound})} Consider the special case of first-order \methodname{}. Let $\vg_i$ and $\hat{\vg}_i$ denote the true and the landmark-based approximated gradients for sample $i$, and assume \methodname{} with $\mG_\dS$ and $\hat{\mG}_\dS$ results in sample weights of $\vw$ and $\hat{\vw}$. Further assume:
    \begin{itemize}
        \item Unbiased: $\forall i \in \{1,2,\dots,n\}: \expected{\hat{\vg}_i} = \vg_i$, i.e., the approximation is unbiased.
        \item Bounded Low-rank MSE: Let $\delta_i^2 = \mathbb{E}[||\hat{\mathbf{g}}_i-\mathbf{g}_i||^2]$, and for some $\Delta^2 \ge 0$: $\frac{1}{n} \sum_{i=1}^{n} \delta_i^2 \le \Delta^2$.
    \end{itemize}
    Then $\expected{||\vw - \hat{\vw})||^2} \le \frac{|\dS| \Delta^2}{\lambda^2 d}$, with $\lambda$ being the regularization coefficient in \methodname{}.
\end{theorem}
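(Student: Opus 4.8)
The plan is to reduce the first-order robust objective to a Euclidean projection, exploit the non-expansiveness of projections onto convex sets, and then control the expected gap between the linear coefficient vectors $\vp$ and $\hat\vp$. Throughout, write $n = |\dS|$ and let $\mathcal{C} = \{\vw \in \Roned{n} : \vw \ge 0,\ \vw^T\vone = n\}$, which is a nonempty, closed, convex set, so the Euclidean projection $\Pi_{\mathcal{C}}$ onto it is well defined and $1$-Lipschitz.

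\textbf{Reduction to a projection.} In the first-order variant the $\mQ$ term in Equation~\ref{eq:f} is dropped, so the objective in Equation~\ref{eq:robust-obj} becomes $\min_{\vw \in \mathcal{C}}\, -\vp(\vtheta)^T\vw + \frac{\lambda}{2}\|\vw\|_2^2$. Completing the square, this equals $\min_{\vw\in\mathcal{C}} \frac{\lambda}{2}\big\|\vw - \tfrac1\lambda\vp(\vtheta)\big\|_2^2$ up to an additive constant independent of $\vw$, hence $\vw = \Pi_{\mathcal{C}}\!\big(\tfrac1\lambda\vp\big)$ and, with the landmark coefficients $\hat\vp = \hat\mG_\dS\vg_\dT$, $\hat\vw = \Pi_{\mathcal{C}}\!\big(\tfrac1\lambda\hat\vp\big)$. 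Non-expansiveness of $\Pi_\mathcal{C}$ then gives $\|\vw - \hat\vw\| \le \tfrac1\lambda\|\vp - \hat\vp\|$, so $\mathbb{E}\big[\|\vw - \hat\vw\|^2\big] \le \tfrac1{\lambda^2}\,\mathbb{E}\big[\|\vp - \hat\vp\|^2\big]$, and it remains to bound the right-hand side.

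\textbf{Bounding the coefficient error.} Coordinatewise, since $\vp = \mG_\dS\vg_\dT$ and the rows of $\hat\mG_\dS = \mC\mG_\dL$ are $\hat\vg_i = \mG_\dL^T\vect{c}_i$, we have $p_i - \hat p_i = \langle \vg_i - \hat\vg_i,\ \vg_\dT\rangle$, so $\mathbb{E}\big[\|\vp - \hat\vp\|^2\big] = \sum_{i=1}^n \mathbb{E}\big[\langle \vg_i - \hat\vg_i, \vg_\dT\rangle^2\big]$. By the unbiasedness hypothesis $\mathbb{E}[\vg_i - \hat\vg_i] = 0$, so the $i$-th term is the variance of the component of the error along the fixed unit direction $\vg_\dT$; using that this error is spread isotropically over the $d$-dimensional parameter space (the content of the formal Theorem~\ref{thm:approx-grad}), this projected variance is a $1/d$ fraction of the total MSE, i.e. $\mathbb{E}\big[\langle \vg_i - \hat\vg_i, \vg_\dT\rangle^2\big] \le \delta_i^2/d$. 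Summing and applying the bounded-MSE hypothesis $\tfrac1n\sum_i \delta_i^2 \le \Delta^2$ yields $\mathbb{E}\big[\|\vp-\hat\vp\|^2\big] \le \tfrac1d\sum_i \delta_i^2 \le \tfrac{n\Delta^2}{d}$, and combining with the previous display and $n = |\dS|$ gives $\mathbb{E}\big[\|\vw-\hat\vw\|^2\big] \le \tfrac{|\dS|\,\Delta^2}{\lambda^2 d}$.

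\textbf{Main obstacle.} The reduction and the non-expansiveness step are routine; the crux is the $1/d$ gain in the coefficient-error bound. A crude Cauchy--Schwarz estimate $\langle \vg_i - \hat\vg_i, \vg_\dT\rangle^2 \le \|\vg_i - \hat\vg_i\|^2$ discards all dimension dependence and yields only $\mathbb{E}[\|\vp-\hat\vp\|^2] \le n\Delta^2$. Recovering the $1/d$ factor requires that the approximation error not align preferentially with $\vg_\dT$ --- intuitively, a single target direction in $\Roned{d}$ only ``sees'' a $1/d$ share of an isotropic error --- and pinning down the precise distributional assumption on $\hat\vg_i - \vg_i$ under which this holds is exactly the role of the formal Theorem~\ref{thm:approx-grad} and Corollary~\ref{cor:approx-grads} in Appendix~\ref{apx:cluster-bound}; the informal statement here is their specialization to $\hat\mG_\dS = \mC\mG_\dL$.
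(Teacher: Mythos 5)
Your proposal is correct and follows essentially the same two-step decomposition as the paper's proof in Appendix~\ref{apx:cluster-bound}: a $1/\lambda$-Lipschitz bound $\|\vw-\hat{\vw}\|\le\frac{1}{\lambda}\|\vp-\hat{\vp}\|$ (which the paper derives via $\lambda$-strong convexity and the variational inequality in Corollary~\ref{cor:approx-grads}, and you obtain equivalently by completing the square and using non-expansiveness of the Euclidean projection onto the constraint set), followed by $\mathbb{E}[\|\vp-\hat{\vp}\|^2]\le n\Delta^2/d$ from isotropy of the error along the fixed unit target direction. You also correctly flag the one genuine subtlety---that the $1/d$ factor cannot follow from unbiasedness and bounded MSE alone---and the paper's formal Theorem~\ref{thm:approx-grad} resolves it exactly as you anticipate, by adding a rotation-equivariance assumption and applying the random signed-permutation rotation of Lemma~\ref{lem:rotation-isotropic} to reduce to the isotropic case.
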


This theorem relates the accuracy of the weights to the low-rank approximation error $\Delta^2$, given the training set size $|\dS|$, dimension $d$, and regularization parameter $\lambda$. If the approximations are unbiased, in high dimension $d$, it suffices to reasonably control $\Delta^2$ in order to recover the correct weights.

\paragraph{Integration with \methodname{}.}
Given a low-rank approximation of the gradient matrix $\mG_\dS \approx \hat{\mG_\dS} = \mC \mG_\dL$, one can define approximations to objects $\vp$ and $\mQ$ as below:

\begin{equation}
\hat{\vp}(\vtheta;\dS, \dT, \dL) = \mC \mG_\dL(\vtheta) \vg_\dT(\vtheta) = \mC \cdot \vp(\vtheta;\dL, \dT),
\label{eq:phat}
\end{equation}
\begin{equation}
\hat{\mQ}(\vtheta;\dS, \dT, \dL) = \frac{1}{|\dS|} \mC \mG_\dL \mH_\dT(\vtheta) \mG_\dL^T \mC^T(\vtheta) = \frac{|\dL|}{|\dS|} \mC \cdot \mQ(\vtheta;\dL, \dT) \cdot \mC^T,
\label{eq:Qhat}
\end{equation}
where $\mC=\min_{\mC \in \Rtwod{n}{\ell}} ||\mE_\dS - \mC \mE_\dL||_2^2$ is the coefficient matrix, defined above. As shown in Equations \ref{eq:phat} and \ref{eq:Qhat}, the landmark-based \methodname{} computes $\vp$ and $\mQ$ only for the landmark points, and then propagates them to the remaining samples.

\paragraph{JVP Embeddings.}
We observe that existing embedding methods perform poorly in this setting, exhibiting weak correlation with the true gradients (see Appendix \ref{apx:embds}~for a detailed empirical analysis). To address this issue, we introduce \textit{Jacobian-vector Product (JVP) Embeddings}.

Given a sample $x \in \dS$, we define its JVP embedding as:
\begin{equation}
h_{JVP}(x; \mathcal{N}, \ell, \dV) = \frac{1}{|\dV|} \sum_{\vv \in \dV} \frac{\partial \mathcal{N}_\ell(x)}{\partial \vtheta_\ell} \cdot \vv
\end{equation}
where $\mathcal{N}$ is the model being trained, $\mathcal{N}_\ell(\cdot)$ represents the logits of the next predicted token after processing $x$ through the first $\ell$ layers (or transformer blocks, in case of LLMs), and $\vtheta_\ell$ are the parameters of these $\ell$ layers. The set $\dV$ contains random Gaussian vectors matching the shape of $\vtheta_\ell$, and the term $\frac{\partial \mathcal{N}_\ell(x)}{\partial \theta_\ell}$ is the Jacobian of $\mathcal{N}_\ell(x)$ with respect to $\theta_\ell$. In words, JVP embeddings project the Jacobian of an intermediate model output onto a set of random directions in parameter space.

\subsection{Tuning the Regularization Coefficient.}
\label{sec:tuning-lambda}
Finally, we describe our approach for selecting the regularization coefficient $\lambda$ in Equation~\ref{eq:robust-obj}. As detailed in Appendix \ref{apx:closed-form}, when $\eta$ is small enough for the second-order term to be negligible and $\lambda = 0$, the solution assigns all the weight to a single sample. As $\lambda$ increases, the solution becomes progressively less sparse, distributing weight across more samples. In the limit $\lambda \rightarrow \infty$, the solution becomes fully dense, assigning equal weight to all samples. In practice, given a budget of $k$ samples to select for training, we tune $\lambda$ via binary search to achieve a target sparsity level of $\frac{|\dS| - k}{|\dS|}$, thereby ensuring that exactly $k$ samples receive non-zero weight, which we will pick for training.

\section{Experiments}
\label{sec:exp}
In this section, we evaluate \methodname{} across several challenging tasks. We start by detailing the datasets, models, and hyperparameters used in our experiments. Then we present our main results and ablations. Further studies are included in Appendix.

\subsection{Setting}
\label{sec:exp-setting}
We largely follow the experimental setup of \citet{ivison2025large} and reuse their code. 

\paragraph{Training Dataset.}
We use Tulu V2 \citep{tulu2}, a combination of 9 instruction-tuning datasets containing approximately 5.8 million samples. Detailed descriptions of each component dataset are provided in Appendix \ref{apx:datasets-models}. Unless stated otherwise, we randomly sample 200k examples from Tulu V2, and then use sampling methods to pick a subset of 10k samples from this pool. 

\paragraph{Target Datasets.}
We evaluate on six target datasets: MMLU \citep{mmlu1, mmlu2}, GSM8k \citep{gsm8k}, BBH \citep{bbh}, TyDIQA \cite{tydiqa}, Codex \citep{codex}, and SQuAD \citep{squad}. For each, we assume access to 8–500 examples from their train, dev, or eval splits \footnote{\blue{Following \citet{ivison2025large}, except we omit AlpacaEval \citep{li2023alpacaeval} as it requires paid  API access.}}. Details are in Appendix \ref{apx:datasets-models}.

\paragraph{Model.}
We mainly consider fine-tuneing the LlaMA-2 7B model \citep{llama2}, consistent with the Tulu V2 paper \citep{tulu2} and the experiments of \citet{ivison2025large}. We also experiment with Llama-3.2 3B \citep{llama3} and Qwen 2.5 1.5/3B \citep{qwen2.5}.

\paragraph{Baselines.} We consider four baselines: (1) Random \textit{Uniform} selection, which picks samples uniformly at random, (2) The state-of-the-art \textit{RDS+} \citep{ivison2025large} embedding-based method, where the embeddings are computed by a position-weighted mean pool of the last hidden layer states, (3) \textit{Mid-PPL} \citep{yin2024compute}, where samples are sorted by their perplexity, and the middle ones are selected, and (4) \textit{Full}, where we do not perform any sampling and train on the full dataset. \blue{Additionally, in Appendix \ref{apx:embds}, we evaluate \methodname{} using the true gradients as embeddings, which we show corresponds to the expensive LESS method \citep{less}.}

\paragraph{Hyperparameters.}
We use the AdamW optimizer with a learning rate of $2 \times 10^{-5}$ and a linear schedule for 2 epochs. The sequence length is fixed at 2048, and we use a microbatch size of 1 with gradient accumulation over 128 steps. All experiments are conducted on a single H100 GPU, and each are repeated with 3 random seeds, including the selection of 200k samples from Tulu V2.

By default, we use first-order \methodname{} with 4096 landmarks. We select the landmarks uniformly at random, as we find this performs comparably to more complex methods such as leverage score sampling or clustering. Linear coefficients are computed via Kernel Ridge Regression (KRR) with an RBF kernel and dampening of $0.01$. JVP embeddings are obtained from the first four transformer blocks using two random vectors ($\ell = 4$, $|\dV| = 2$), following a brief warm-up on 10k random samples. The model is then reset and trained on the selected subset. This warm-up is needed to stabilize gradients (see Appendix \ref{apx:grad-analysis}). Gradients are projected to 131072 dimensions via Hadamard projections; we use the largest dimension that fits in GPU memory, as projection cost does not depend on the dimension (Appendix \ref{apx:proj}). After selection, we do not incorporate the sample weights during training, as experiments in Appendix \ref{apx:weighted-loss} suggest this does not improve performance.
\subsection{Results}
\begin{table*}[t]
\centering
\caption{Accuracy ($\pm$ standard deviation) and estimated embedding and selection cost (Embd+Sel, in TF, TeraFLOPs) of various methods across tasks and models. For each model–dataset pair, 10k training samples are selected from a pool of size $200k$ from the Tulu V2 dataset \citep{tulu2}. We additionally report average improvement over the Uniform baseline (Avg. $\Delta$ w/ Uniform). Top performing selection methods, as well as Full training numbers are in bold.}
\scalebox{0.60}{
\begin{tabular}{l|lcccccc|c|c}
\toprule
\textbf{Model} & \textbf{Method} & \textbf{MMLU} & \textbf{GSM8k} & \textbf{BBH} & \textbf{TyDIQA} & \textbf{CODEX} & \textbf{SQuAD} & \textbf{Avg. $\Delta$ w/ Uniform} & \textbf{Embd+Sel Cost} \\
\midrule
\multirow{4}{*}{Llama2-7B} 
  & Uniform   & 45.6 $\pm$ 0.43 & 17.5 $\pm$ 1.08 & 41.8 $\pm$ 0.20 & 51.6 $\pm$ 0.38 & 27.0 $\pm$ 0.60 & 80.8 $\pm$ 1.05 & 0.00 & 0 \\
  & Mid-PPL  & 45.6 $\pm$ 0.86 & 15.0 $\pm$ 0.54 & 40.9 $\pm$ 0.23 & 52.1 $\pm$ 0.44 & 26.1 $\pm$ 1.56 & 80.7 $\pm$ 0.73 & -0.65 & 2800 TF \\
  & RDS+     & 46.3 $\pm$ 0.33 & 20.2 $\pm$ 2.77 & 42.7 $\pm$ 0.61 & 50.5 $\pm$ 0.84 & \textbf{30.4 $\pm$ 0.96} & \textbf{85.3 $\pm$ 0.22} & +1.85 & 2800 TF \\
  & InfDist  & \textbf{48.3 $\pm$ 0.21} & \textbf{20.3 $\pm$ 1.65} & \textbf{43.2 $\pm$ 0.67} & \textbf{53.6 $\pm$ 0.34} & 29.5 $\pm$ 3.14 & 83.2 $\pm$ 1.02 & \textbf{+2.30} & \textbf{872 TF} \\
  & \textbf{Full}     & \textbf{48.8 $\pm$ 0.08} & \textbf{21.2 $\pm$ 0.85} & \textbf{43.9 $\pm$ 0.24} & \textbf{51.3 $\pm$ 0.18} & \textbf{29.3 $\pm$ 3.72} & \textbf{83.6 $\pm$ 0.30} & \textbf{+2.30}  & $-$ \\
\midrule
\multirow{4}{*}{Llama3.2-3B} 
  & Uniform   & 53.9 $\pm$ 0.52 & 34.6 $\pm$ 1.22 & 48.9 $\pm$ 0.67 & 63.1 $\pm$ 0.36 & 56.1 $\pm$ 1.35 & 80.4 $\pm$ 0.51 & 0.00  & 0 \\
  & Mid-PPL  & \textbf{54.0 $\pm$ 0.27} & 29.5 $\pm$ 0.12 & 48.3 $\pm$ 0.44 & \textbf{65.9 $\pm$ 0.66} & 55.9 $\pm$ 4.40 & 80.9 $\pm$ 0.18 & -0.42 & 1200 TF \\
  & RDS+     & 53.1 $\pm$ 0.58 & \textbf{38.4 $\pm$ 0.58} & \textbf{49.6 $\pm$ 0.45} & 61.0 $\pm$ 0.35 & \textbf{60.6 $\pm$ 1.77} & \textbf{84.2 $\pm$ 0.47} & \textbf{+1.65}  & 1200 TF \\
  & InfDist  & \textbf{54.0 $\pm$ 0.94} & 35.7 $\pm$ 1.28 & 48.6 $\pm$ 0.27 & 64.6 $\pm$ 1.29 & 55.4 $\pm$ 1.10 & 83.3 $\pm$ 1.97 & +0.77  & \textbf{417 TF} \\
  & \textbf{Full}     & \textbf{52.9 $\pm$ 0.87} & \textbf{37.0 $\pm$ 0.33} & \textbf{48.9 $\pm$ 0.14} & \textbf{62.5 $\pm$ 1.50} & \textbf{57.7 $\pm$ 2.81} & \textbf{83.9 $\pm$ 1.15} & \textbf{+0.98}  & $-$ \\
\midrule
\multirow{4}{*}{Qwen2.5-1.5B} 
  & Uniform   & 58.8 $\pm$ 0.11 & \textbf{63.3 $\pm$ 1.67} & 44.1 $\pm$ 0.38 & 55.0 $\pm$ 0.32 & 70.5 $\pm$ 2.06 & 16.5 $\pm$ 4.65 & 0.00  & 0 \\
  & Mid-PPL  & 58.9 $\pm$ 0.17 & 63.2 $\pm$ 0.65 & \textbf{44.3 $\pm$ 0.31} & 54.3 $\pm$ 0.29 & 70.6 $\pm$ 1.61 & 22.3 $\pm$ 3.39 & +0.90 & 600 TF \\
  & RDS+     & 58.3 $\pm$ 0.07 & 60.1 $\pm$ 0.13 & 44.2 $\pm$ 0.24 & 53.0 $\pm$ 0.38 & \textbf{72.3 $\pm$ 0.00} & 46.0 $\pm$ 3.12 & +4.28  & 600 TF \\
  & InfDist  & \textbf{59.4 $\pm$ 0.12} & 62.0 $\pm$ 0.46 & 44.1 $\pm$ 0.35 & \textbf{57.6 $\pm$ 0.32} & 69.8 $\pm$ 1.15 & \textbf{54.4 $\pm$ 13.13} & \textbf{+6.52}  & \textbf{208 TF} \\
  & \textbf{Full}     & \textbf{59.4 $\pm$ 0.13} & \textbf{60.3 $\pm$ 0.38} & \textbf{44.0 $\pm$ 0.22} & 5\textbf{0.3 $\pm$ 1.21} & \textbf{73.0 $\pm$ 1.79} & \textbf{63.2 $\pm$ 6.05} & \textbf{+7.00}  & $-$ \\
\midrule
\multirow{4}{*}{Qwen2.5-3B} 
  & Uniform   & 63.7 $\pm$ 0.27 & 68.7 $\pm$ 1.87 & 54.9 $\pm$ 0.24 & 65.6 $\pm$ 0.16 & 83.1 $\pm$ 1.35 & 84.5 $\pm$ 0.54 & 0.00  & 0 \\
  & Mid-PPL  & 63.7 $\pm$ 0.18 & \textbf{70.8 $\pm$ 0.84} & \textbf{55.1 $\pm$ 0.21} & 65.3 $\pm$ 0.41 & \textbf{83.3 $\pm$ 3.14} & 79.9 $\pm$ 2.61 & -0.40 & 1200 TF \\
  & RDS+     & 63.6 $\pm$ 0.19 & 67.4 $\pm$ 0.68 & 54.0 $\pm$ 0.63 & 65.1 $\pm$ 0.33 & 82.4 $\pm$ 1.46 & \textbf{86.3 $\pm$ 0.33} & -0.28  & 1200 TF \\
  & InfDist  & \textbf{64.6 $\pm$ 0.19} & 67.8 $\pm$ 0.60 & 53.9 $\pm$ 0.36 & \textbf{66.9 $\pm$ 0.23} & 82.4 $\pm$ 0.55 & 86.0 $\pm$ 0.22 & \textbf{+0.18}  & \textbf{340 TF} \\
  & \textbf{Full}     & \textbf{63.8 $\pm$ 0.06} & \textbf{71.0 $\pm$ 1.78} & \textbf{53.8 $\pm$ 0.32} & \textbf{64.5 $\pm$ 0.42} & \textbf{82.0 $\pm$ 1.41} & \textbf{85.4 $\pm$ 0.42} & \textbf{0.00}  & $-$ \\
\bottomrule
\end{tabular}
}
\label{tab:main-exp}
\end{table*}
\paragraph{Main Experiments.}
Table~\ref{tab:main-exp} summarizes our main experimental results. In each case, a subset of size 10k is selected from a pool of 200k Tulu V2~\citep{tulu2} samples. On average, \methodname{} achieves higher performance compared to other more expensive selection baselines in three out of four models and remains competitive in the fourth, while enabling 2.9--3.5$\times$ faster selection. Notably, for two models, it matches or surpasses training on the full dataset. These results clearly showcase the effectiveness and efficiency of \methodname{}.

\paragraph{Selection Runtime Estimation.}
The table also reports the approximate FLOPs required for sample selection. Following the estimation from \citet{kaplan2020scaling}, each forward pass is assumed to cost $2d$ FLOPs and each backward pass $4d$ FLOPs, where $d$ is the number of model parameters. Mid-PPL and RDS+ require one forward pass per sample. \methodname{} requires computing a JVP embedding for each sample, along with full forward and backward passes for the 4096 selected landmarks. We estimate the cost of a JVP as $2\times$ that of a partial forward pass over the same number of blocks, following \citet{jvp1}.

\paragraph{Pareto Superiority.}
We repeat the above experiments on the Llama2-7B model using pool sizes 50k, 100k, 150k, and 200k. For each pool size, we select 2048, 4096, 6144, and 8192 landmarks, respectively, maintaining a fixed pool-to-landmark ratio. As shown in Figure~\ref{fig:pareto}, all points corresponding to \methodname{} lie on the Pareto frontier, matching or surpassing the performance of RDS+ while requiring lower overall cost of embedding, sampling, and training.

\paragraph{Effect of Number of Landmarks.}
To evaluate how the number of landmarks impacts performance, we repeat the experiments on Llama2-7B and report average improvement over the Uniform baseline across the six tasks in Figure~\ref{fig:landmarks_heatmap} (Left). As shown, \methodname{} improves with more landmarks, surpassing RDS+ beyond 2048 landmarks.

\paragraph{Effect of Pool Size and Number of Selected Samples.}
Figure~\ref{fig:landmarks_heatmap} (Right) presents a heatmap of MMLU accuracy on Llama2-7B across different combinations of pool size (up to 200k) and number of selected samples. For each pool size, we use the same number of landmarks as the Pareto experiment. As expected, accuracy improves with larger pools and more selected samples, highlighting the scalability and robustness of \methodname{}.

\begin{figure}[t] % The * makes it span both columns
    \centering
    \begin{subfigure}{0.47\textwidth} % Each image takes up 1/2 of the width
        \includegraphics[width=\linewidth]{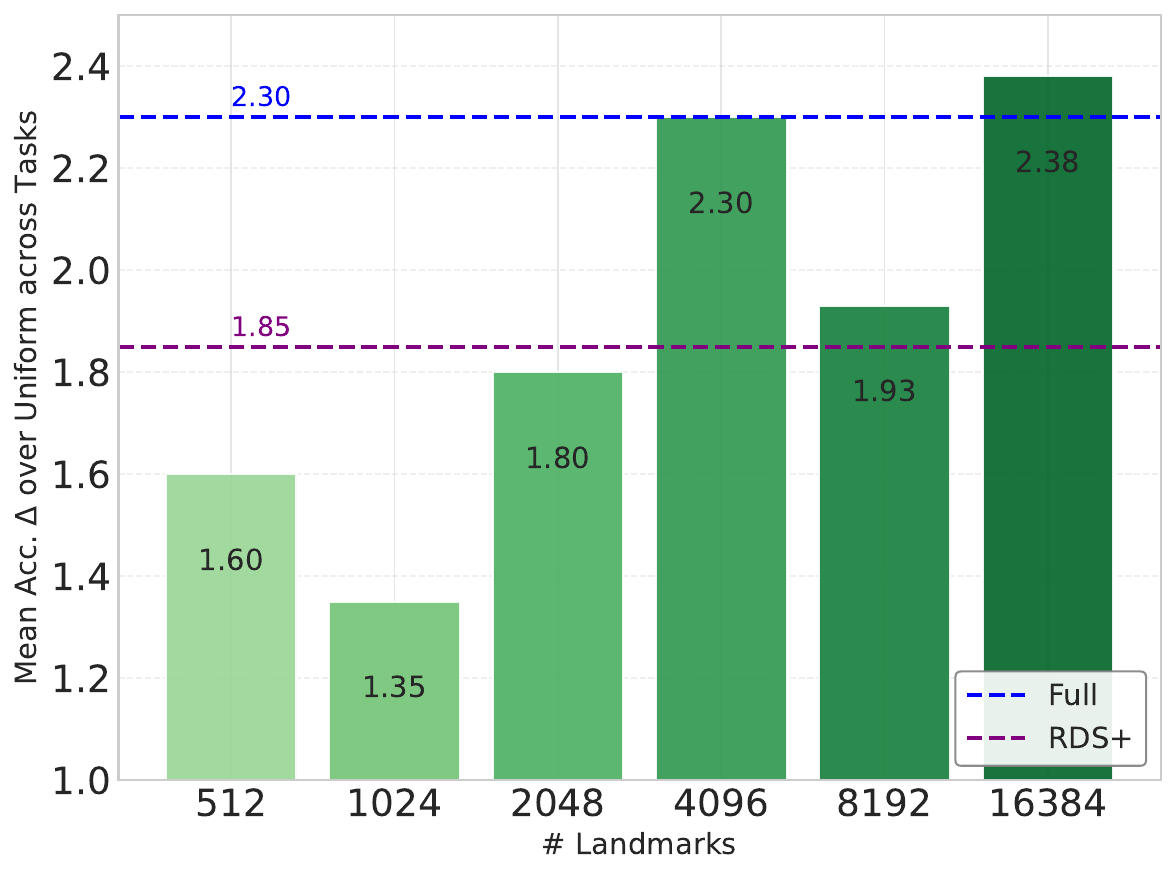}
    \end{subfigure}
    \hfill
    \begin{subfigure}{0.47\textwidth}
        \includegraphics[width=\linewidth]{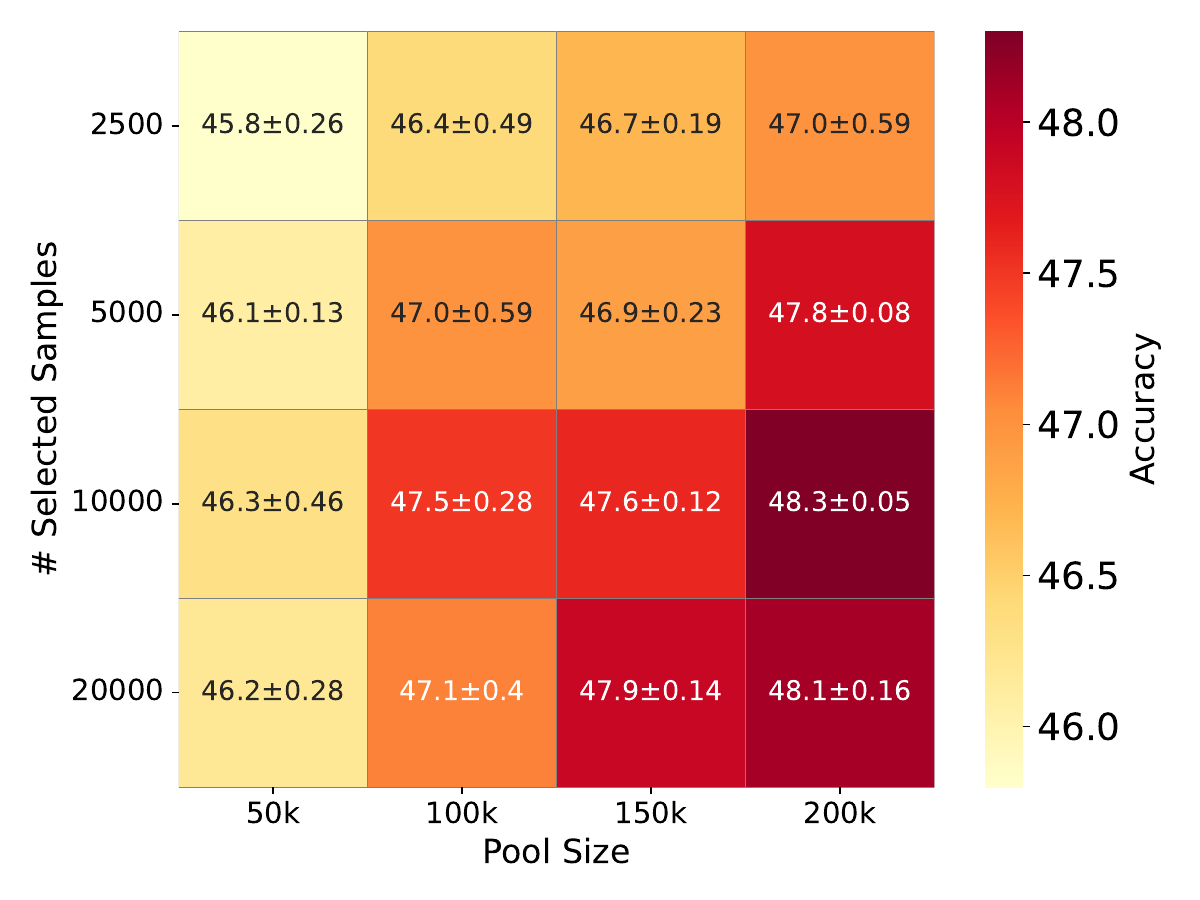}
    \end{subfigure}
    \caption{(Left) Effect of the number of landmarks on the performance of \methodname{} across six tasks using Llama2-7B. (Right) MMLU accuracy of \methodname{} on Llama2-7B across different pool sizes and number of selected samples.}
    \label{fig:landmarks_heatmap}
\end{figure}

\section{Limitations and Future Work}
\label{sec:limitation}
Below, we outline three main limitations of our work, along with corresponding directions for future research.

\paragraph{No Target Distribution.}
\blue{While we demonstrate that \methodname{} is highly effective for targeted instruction tuning across a range of models and tasks, it does not directly extend to general data selection scenarios where no target dataset is available. In such cases, one could define the target distribution as a small set of high-quality examples or a representative subset of the training corpus. Investigating how to construct and utilize such proxy targets is an important direction for future work.}

\paragraph{Pre-training.}
\blue{Extending \methodname{} to the pre-training setting presents unique challenges. In particular, the considerably longer duration of pre-training implies that gradients may shift substantially over time, likely making a single static selection insufficient. This suggests the need for a multi-phase selection strategy, such as periodic re-sampling. We leave the exploration of such dynamic approaches to future work.}

\paragraph{Warm-up Cost.}
\blue{We exclude the cost of the warm-up phase from our runtime measurements for two reasons: (1) as the training pool grows, the cost of a brief warm-up on a small random subset becomes negligible compared to embedding the full dataset; and (2) the warm-up can be shortened (as shown in Appendix \ref{apx:grad-analysis}) or compressed—for example, via Low-Rank Adaptation \citep{hu2022lora}, as in \citet{less}. We leave a rigorous investigation of warm-up optimization to future work.}

\bibliography{example_paper}

%%%%%%%%%%%%%%%%%%%%%%%%%%%%%%%%%%%%%%%%%%%%%%%%%%%%%%%%%%%%%%%%%%%%%%%%%%%%%%%
%%%%%%%%%%%%%%%%%%%%%%%%%%%%%%%%%%%%%%%%%%%%%%%%%%%%%%%%%%%%%%%%%%%%%%%%%%%%%%%
% APPENDIX
%%%%%%%%%%%%%%%%%%%%%%%%%%%%%%%%%%%%%%%%%%%%%%%%%%%%%%%%%%%%%%%%%%%%%%%%%%%%%%%
%%%%%%%%%%%%%%%%%%%%%%%%%%%%%%%%%%%%%%%%%%%%%%%%%%%%%%%%%%%%%%%%%%%%%%%%%%%%%%%
\newpage
\appendix

\section{Gradient Analysis During Training}
\label{apx:grad-analysis}

In this section, we analyze the behavior of gradients throughout training. We fine-tune a LLaMA2-7B model \citep{llama2} on 10000 randomly selected samples from Tulu V2 \citep{tulu2} for 2 epochs, saving model checkpoints every 10 steps.

For each checkpoint—including the initial and final models—we compute the gradients of 1000 held-out samples from Tulu V2, as well as samples from the target dataset BBH \citep{bbh}, and project them into an 8192-dimensional space using random Rademacher matrices, following the efficient GPU implementation of \citet{trak}, also adopted in \citet{less}. For each dataset, we compute the average gradient cosine similarity across checkpoints. As shown in Figure~\ref{fig:grad-sim-tulu-bbh}, while the gradient directions can change substantially in the early steps, they stabilize quickly during training. This observation justifies the use of a short warm-up phase as both necessary and sufficient. Similar plots for GSM8k \citep{gsm8k} and SQuAD \citep{squad} are provided later in the Appendix (Figure~\ref{fig:grad-sim-other}).

Additionally, for each dataset and checkpoint, we measure the Pearson product-moment correlation between gradient norms and the number of label tokens per sample. As shown in Figure~\ref{fig:grad-len-corr}, we observe a consistent negative correlation, which supports our decision to normalize gradients prior to distillation.

\section{Linear Model Study}

\begin{figure}[t]
    \centering
    \includegraphics[width=\linewidth]{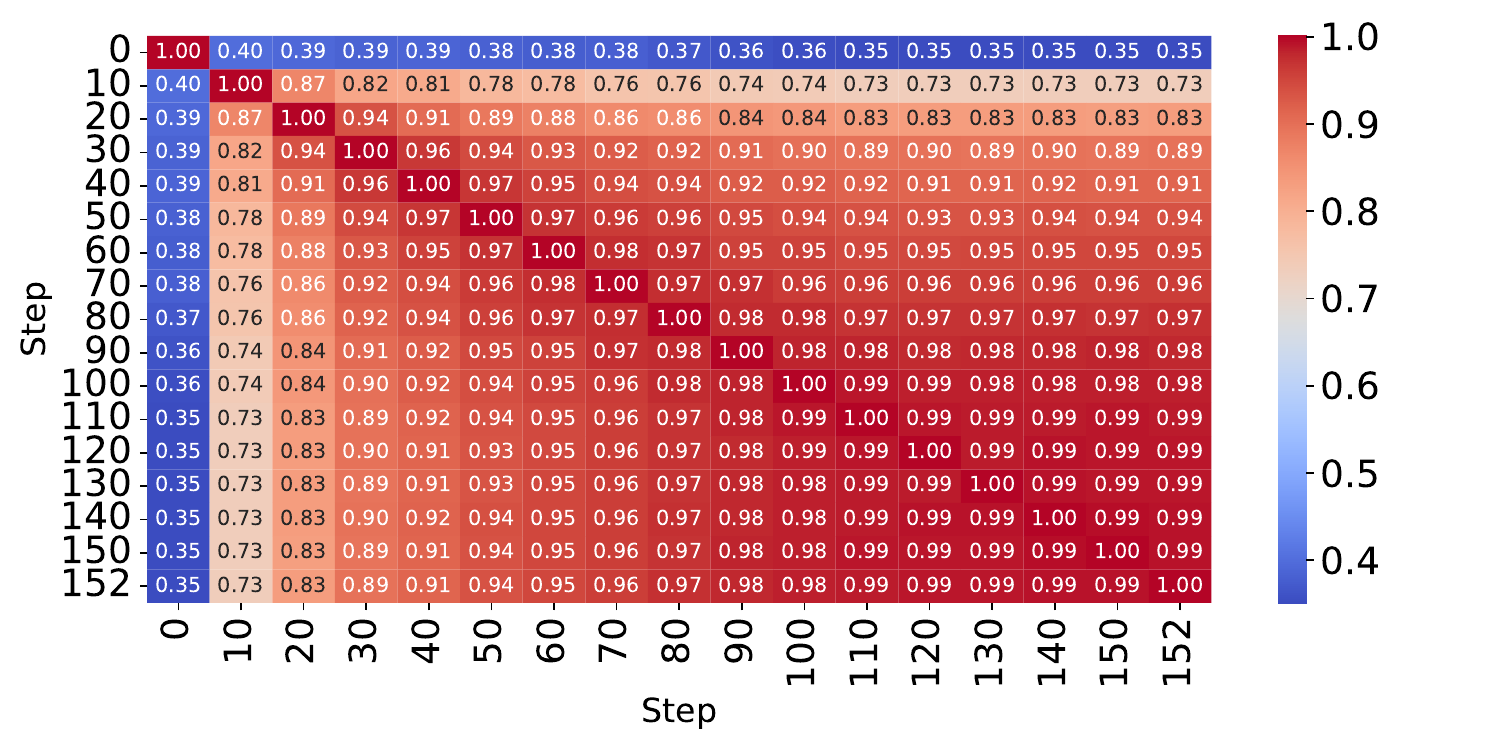}
    \includegraphics[width=\linewidth]{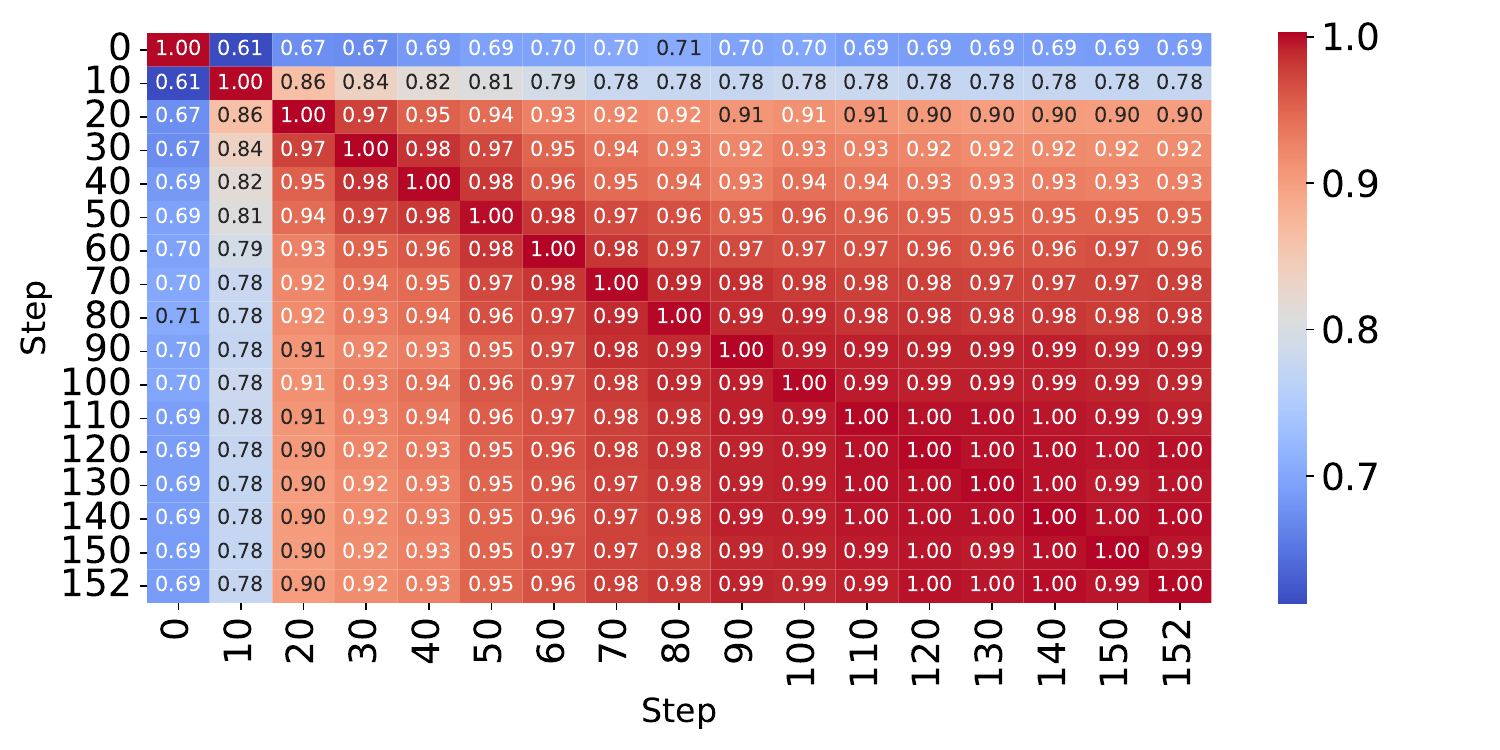}
    \caption{Average gradient cosine similarity on unseen samples from Tulu V2 (top) and BBH (bottom) across checkpoints.}
    \label{fig:grad-sim-tulu-bbh}
\end{figure}

In this section, we show that a regularization term can be effective in robustifying Objective \ref{eq:local-obj} to small changes in the model parameters $\vtheta$, when the model is linear and the loss is quadratic.

\label{apx:linear}
For a fixed $\epsilon > 0$, define a new objective as below:
\begin{equation}
\vw^* = \argmin_{\vw}~ \max_{||\vdelta|| \le \epsilon}~ f(\vw;\vtheta + \vdelta),
\label{eq:robust-obj-lin}
\end{equation}
minimizing the maximum value of $f$ around a point $\vtheta$ in a neighborhood of radius $\epsilon$.
This ensures the weights are stable as long as $\vtheta$ is in this neighborhood.
To solve for $w$, we employ Lemma \ref{lem:1} below:

\begin{lemma}
\label{lem:1}
Assume $\loss(\vtheta; \dD, \vw) = \sum_{i=1}^{|\dD|} \vw_i (\langle \vx^{\dD}_i, \vtheta \rangle - y^{\dD}_i)^2$, where $\dD=\{(\vx^{\dD}_1, y^{\dD}_1), (\vx^{\dD}_2, y^{\dD}_2), ..., (\vx^{\dD}_{|\dD|}, y^{\dD}_{|\dD|})\}$ is a dataset. For datasets $\dS$ and $\dT$, let $\mH_{\dT} = \nabla^2_{\vtheta} \loss(\vtheta;\dT,\vone)$, $\vg_\dT = \nabla_{\vtheta} \loss(\vtheta;\dT,\vone)$, $\mH_\vw = \nabla^2_{\vtheta} \loss(\vtheta;\dS,\vw)$, and $\vg_\vw = \nabla_{\vtheta} \loss(\vtheta;\dS,\vw)$. Define $\va_\vw$ and $\mB_\vw$ as below:
\begin{align*}
    \va_\vw &= - \mH_\vw \vg_\dT - \mH_\dT \vg_\vw + \eta \mH_\vw \mH_\dT \vg_\vw \numberthis
\end{align*}
\begin{align*}
    \mB_\vw &= - \mH_\dT \mH_\vw + \frac{\eta}{2} \mH_\vw \mH_\dT \mH_\vw \numberthis
\end{align*}

In the setting above (linear model with quadratic loss), the function $f$ has the property that $\forall~ \vtheta,\vdelta \in \Roned{d}, \vw \in \Roned{n}$:
\begin{equation}
    f(\vw;\vtheta + \vdelta) = f(\vw;\vtheta) + \va_\vw^T\vdelta + \vdelta^T \mB_\vw \vdelta.
\end{equation}
\end{lemma}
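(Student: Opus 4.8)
## Proof Proposal for Lemma \ref{lem:1}

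The plan is to directly expand $f(\vw; \vtheta + \vdelta)$ using the definition $f(\vw;\vtheta) = -\vp(\vtheta)^T\vw + \frac{\eta}{2}\vw^T\mQ(\vtheta)\vw$, and then exploit the special structure of the linear-quadratic setting: when $\ell(\vtheta;\vx,y) = (\langle\vx,\vtheta\rangle - y)^2$, the Hessian $\mH_\dT = 2\sum_i \vx_i^\dT (\vx_i^\dT)^T$ is \emph{constant} in $\vtheta$, and the gradient is \emph{affine} in $\vtheta$, namely $\vg_\dT(\vtheta+\vdelta) = \vg_\dT(\vtheta) + \mH_\dT\vdelta$. The same holds for $\vg_\vw, \mH_\vw$ on $\dS$. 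First I would rewrite $\vp$ and $\mQ$ in terms of these per-dataset gradients and Hessians rather than the stacked matrices $\mG_\dS, \mG_\dT$: observe that $\vp(\vtheta)^T\vw = \vg_\vw(\vtheta)^T\vg_\dT(\vtheta)$ (since $\frac{1}{|\dS|}\mG_\dS^T\vw = \nabla_\vtheta\loss(\vtheta;\dS,\vw) = \vg_\vw$ up to the normalization convention), and similarly $\frac{\eta}{2}\vw^T\mQ(\vtheta)\vw = \frac{\eta}{2}\vg_\vw(\vtheta)^T\mH_\dT\vg_\vw(\vtheta)$. This gives the clean form
\begin{equation}
f(\vw;\vtheta) = -\vg_\vw(\vtheta)^T\vg_\dT(\vtheta) + \tfrac{\eta}{2}\,\vg_\vw(\vtheta)^T\mH_\dT\,\vg_\vw(\vtheta).
\end{equation}

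Next I would substitute the affine updates $\vg_\dT(\vtheta+\vdelta) = \vg_\dT + \mH_\dT\vdelta$ and $\vg_\vw(\vtheta+\vdelta) = \vg_\vw + \mH_\vw\vdelta$ (all objects on the right evaluated at $\vtheta$) into this expression and expand. The first term $-(\vg_\vw + \mH_\vw\vdelta)^T(\vg_\dT + \mH_\dT\vdelta)$ produces $-\vg_\vw^T\vg_\dT$, a linear-in-$\vdelta$ piece $-(\mH_\vw\vg_\dT + \mH_\dT\vg_\vw)^T\vdelta$ (using symmetry of the Hessians), and a quadratic piece $-\vdelta^T\mH_\vw\mH_\dT\vdelta$. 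The second term $\frac{\eta}{2}(\vg_\vw + \mH_\vw\vdelta)^T\mH_\dT(\vg_\vw + \mH_\vw\vdelta)$ expands to $\frac{\eta}{2}\vg_\vw^T\mH_\dT\vg_\vw$, plus a linear piece $\eta(\mH_\vw\mH_\dT\vg_\vw)^T\vdelta$, plus a quadratic piece $\frac{\eta}{2}\vdelta^T\mH_\vw\mH_\dT\mH_\vw\vdelta$. Collecting the constant terms reproduces $f(\vw;\vtheta)$; collecting the linear coefficients gives exactly $\va_\vw = -\mH_\vw\vg_\dT - \mH_\dT\vg_\vw + \eta\mH_\vw\mH_\dT\vg_\vw$; and collecting the quadratic terms gives $\mB_\vw = -\mH_\dT\mH_\vw + \frac{\eta}{2}\mH_\vw\mH_\dT\mH_\vw$ (after noting $\vdelta^T\mH_\vw\mH_\dT\vdelta = \vdelta^T\mH_\dT\mH_\vw\vdelta$ is not needed since we may simply record the matrix as written, or symmetrize; the stated $\mB_\vw$ matches one valid representative of the quadratic form).

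The main obstacle, such as it is, is bookkeeping care around two conventions rather than any genuine difficulty: (i) the $\frac{1}{|\dD|}$ normalization in the definition of $\loss$ versus the unnormalized sum $\sum_i \vw_i(\langle\vx_i,\vtheta\rangle - y_i)^2$ used in the lemma statement — I would note that the lemma adopts the unnormalized convention for $\dS$ (absorbing $1/|\dS|$ into $\vw$) so that $\vg_\vw = \nabla_\vtheta\loss(\vtheta;\dS,\vw)$ coincides with $\frac{1}{|\dS|}\mG_\dS^T\vw$ up to this rescaling, and the factor-of-$2$ Hessians $\mH = \nabla^2(\langle\vx,\cdot\rangle - y)^2 = 2\vx\vx^T$; and (ii) whether the quadratic form $\vdelta^T\mB_\vw\vdelta$ should be symmetrized — since only the value of the quadratic form matters, $\mB_\vw$ need only satisfy $\vdelta^T\mB_\vw\vdelta = -\vdelta^T\mH_\dT\mH_\vw\vdelta + \frac{\eta}{2}\vdelta^T\mH_\vw\mH_\dT\mH_\vw\vdelta$, and the given expression does. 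Everything else is exact because linearity of the model makes the second-order Taylor expansion of $f$ in $\vdelta$ terminate with no remainder — a point the main text already flagged when noting the approximation is exact for linear models. I would close by remarking that this exactness is precisely why the robust objective \eqref{eq:robust-obj-lin} is tractable: the inner maximization over $\|\vdelta\|\le\epsilon$ is a maximization of an explicit quadratic, setting up the regularization-term derivation that follows.
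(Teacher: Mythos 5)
Your proposal is correct and follows essentially the same route as the paper's proof: both exploit that the Hessians are constant and the gradients affine in $\vtheta$, substitute $\vg_\dT(\vtheta+\vdelta)=\vg_\dT+\mH_\dT\vdelta$ and $\vg_\vw(\vtheta+\vdelta)=\vg_\vw+\mH_\vw\vdelta$ into the bilinear and quadratic forms defining $f$, and collect the linear and quadratic coefficients in $\vdelta$ to read off $\va_\vw$ and $\mB_\vw$. Your observations about the unnormalized-sum convention and the fact that $\mB_\vw$ need only represent the quadratic form (not be symmetric) match the paper's implicit treatment.
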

\begin{proof}
First notice that, for simplicity, the loss here is defined as the sum (as opposed to the average) of per-sample losses, which drops the $\frac{1}{|\dS|}$ terms in the loss, gradient, Hessian, and $\mQ$ objects. 
Recalling the definition of $f$ from \ref{eq:f}, we can write $f(\vw;\vtheta+\vdelta) = -\vp(\vtheta+\vdelta)^T \vw + \frac{\eta}{2} \vw^T \mQ(\vtheta+\vdelta)\vw$.
Since the loss is quadratic in $\vtheta$, the Hessian is independent of $\vtheta$, and the derivatives above the second order are zero. Hence, defining $\vg_i^{\dD}(\vtheta)$ and $\mH_i^{\dD}$ as the gradient and Hessian of the sample $i$ in $\dD$, we can write:
\begin{equation}
    \vg^{\dD}_i(\vtheta+\vdelta) = \vg^{\dD}_i(\vtheta) + \mH_i^{\dD} \vdelta
\end{equation}
for any $\vdelta$ with the same dimension as $\vtheta$. Setting $\dD = \dT$ and summing across samples, we can write:
\begin{equation}
    \vg_{\dT}(\vtheta+\vdelta) = \vg_{\dT}(\vtheta) + \mH_{\dT} \vdelta
\end{equation}
Additionally, setting $\dD=\dS$ and taking a weighted sum we can write:
\begin{equation}
    \mG_{\dS}(\vtheta+\vdelta) \vw = \mG_{\dS}(\vtheta) \vw + \mH_{\vw} \vdelta
\end{equation}
Next, we see that,
\begin{align*}
    \vp(\vtheta+\vdelta)^T \vw &= \vg_\dT(\vtheta+\vdelta) \mG_\dS(\vtheta+\vdelta)\vw \\
    &= (\vg_{\dT}(\vtheta)^T + \vdelta^T \mH_{\dT})(\mG_{\dS}(\vtheta) \vw + \mH_{\vw} \vdelta) \\
    &= \vp(\vtheta)^T \vw + (\vg_\dT(\vtheta)^T \mH_\vw + \vg_\vw(\vtheta)^T \mH_\dT) \vdelta + \vdelta^T \mH_\dT \mH_\vw \vdelta \numberthis
\end{align*}
And,
\begin{align*}
    \vw^T \mQ(\vtheta+\vdelta)^T \vw &= \vw^T \mG_\dS(\vtheta+\vdelta)^T \mH_\dT \mG_\dS(\vtheta+\vdelta) \vw \\
    &= (\vw^T \mG_\dS(\vtheta)^T + \vdelta^T \mH_\vw) \mH_\dT (\mG_\dS(\vtheta)\vw + \mH_\vw \vdelta) \\
    &= \vw^T \mQ(\vtheta) \vw + 2\vg_\vw(\vtheta)^T \mH_\dT \mH_\vw \vdelta + \vdelta^T \mH_\vw \mH_\dT \mH_\vw \vdelta \numberthis
\end{align*}
Putting them together:
\begin{align*}
    f(\vw;\vtheta+\vdelta) &= -\vp(\vtheta+\vdelta)^T \vw + \frac{\eta}{2} \vw^T \mQ(\vtheta+\vdelta)\vw \\
    &= f(\vw;\vtheta) - ((\vg_\dT(\vtheta)^T \mH_\vw + \vg_\vw(\vtheta)^T \mH_\dT) \vdelta + \vdelta^T \mH_\dT \mH_\vw \vdelta) \\
    & \quad\quad\quad\quad+ \frac{\eta}{2}(2\vg_\vw(\vtheta)^T \mH_\dT \mH_\vw \vdelta + \vdelta^T \mH_\vw \mH_\dT \mH_\vw \vdelta) \\
    &= f(\vw;\vtheta) + (-\vg_\dT(\vtheta)^T \mH_\vw - \vg_\vw(\vtheta)^T \mH_\dT + \eta \vg_\vw(\vtheta)^T \mH_\dT \mH_\vw) \vdelta \\
    & \quad\quad\quad\quad+ \vdelta^T (-\mH_\dT \mH_\vw + \frac{\eta}{2} \mH_\vw \mH_\dT \mH_\vw) \vdelta \\
    &= f(\vw;\vtheta) + \va_\vw^T\vdelta + \vdelta^T \mB_\vw \vdelta
\end{align*}
which concludes the proof.
\end{proof}

Substituting the result of the \ref{lem:1} into Objective \ref{eq:robust-obj-lin}, we can write
\begin{align*}
    \vw^* &= \argmin_{\vw}~ \max_{||\vdelta|| \le \epsilon}~ \Big[ f(\vw;\vtheta) + \va_\vw^T\vdelta + \vdelta^T \mB_\vw \vdelta \Big] \\
    &= \argmin_{\vw}~ \Big[ f(\vw;\vtheta) + \max_{||\vdelta|| \le \epsilon}~ (\va_\vw^T\vdelta + \vdelta^T \mB_\vw \vdelta) \Big] \numberthis \label{eq:minmax}
\end{align*}
We maximize $r(\vdelta) = \va_\vw^T\vdelta + \vdelta^T \mB_\vw \vdelta$ in the sphere with radius $\epsilon$
approximately by taking a single step of size $\epsilon$ in the gradient direction, i.e., $\vdelta^* \approx \epsilon \cdot \frac{r'(\vzero)}{||r'(\vzero)||}$.
This approximation is standard in the sharpness-aware optimization literature \citep{sam, cram}, which addresses a similar min-max objective to search for flat minima.
Note that $r'(\vdelta) = \va_\vw + (\mB_\vw + \mB_\vw^T) \vdelta$, hence $r'(\vzero) = \va_\vw$ and
\begin{equation}
    \max_{||\vdelta|| \le \epsilon}~ (\va_\vw^T\vdelta + \vdelta^T \mB_\vw \vdelta) \approx \epsilon \cdot ||\va_\vw|| + \epsilon^2 \cdot \frac{\va_\vw^T \mB_\vw \va_\vw}{||\va_\vw||^2}.
\end{equation}
Substituting into Equation \ref{eq:minmax}, we get the following objective:
\begin{equation}
    \vw^* \approx \argmin_{\vw}~ \Big[ f(\vw;\vtheta) + \epsilon \cdot ||\va_\vw|| + \epsilon^2 \cdot \frac{\va_\vw^T \mB_\vw \va_\vw}{||\va_\vw||^2} \Big].
\end{equation}
This suggests that the robustness of the weights can be controlled via the hyperparameter $\epsilon$, which determines the strength of the regularization.

We apply this regularization to the running example introduced in Section \ref{sec:running}. As shown in Figure \ref{fig:robust-linear-eps}, using the tuned value $\epsilon = 10^{-4}$ yields better performance than the default weights. However, due to the high computational cost of this regularization term, we use standard L2 regularization for general non-linear models.

\begin{figure}[t]
    \centering
    \includegraphics[width=0.8\linewidth]{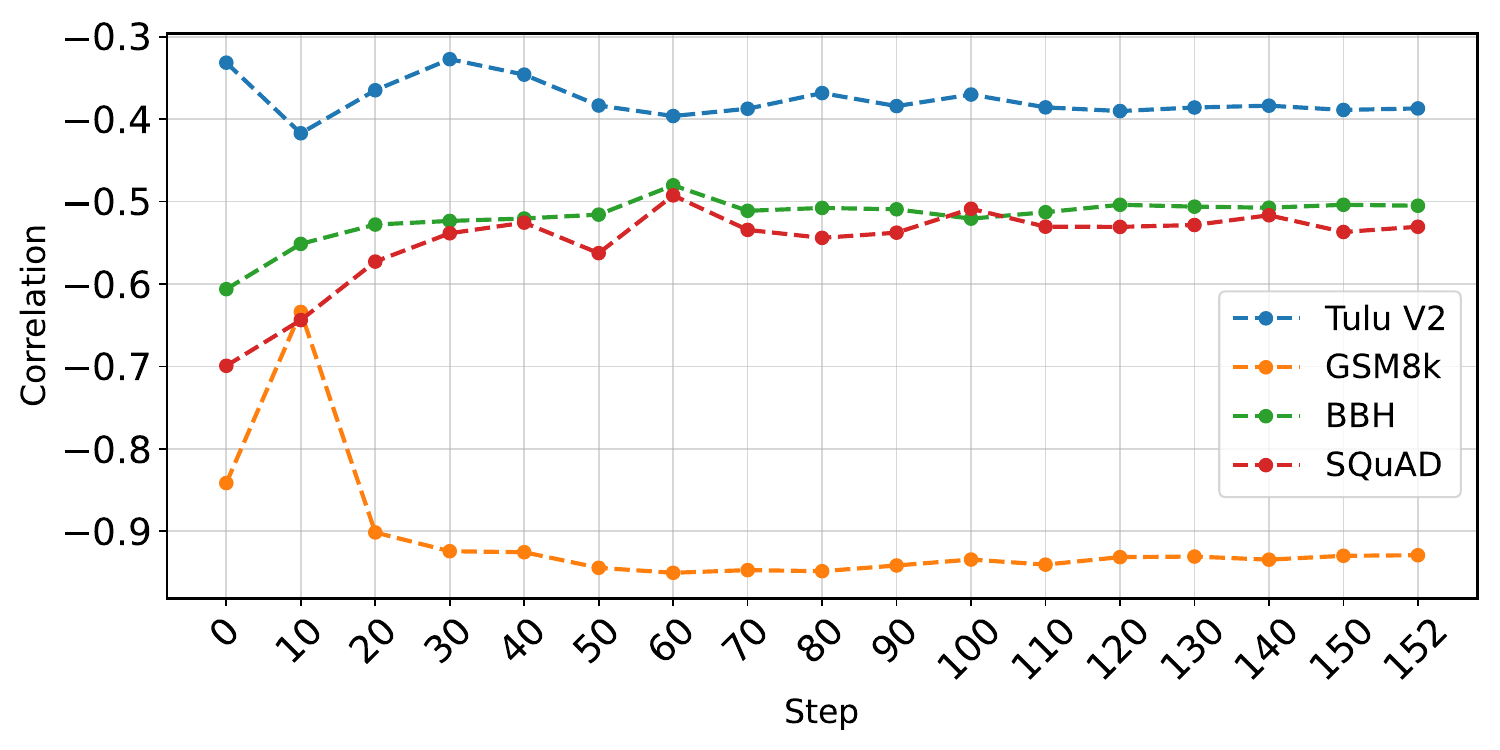}
    \caption{Correlation between gradient norm and number of label tokens, across checkpoints on four datasets.}
    \label{fig:grad-len-corr}
\end{figure}

\section{Adam Optimizer}
\label{apx:adam-greedy}
Here we derive Equations \ref{eq:p-adam} and \ref{eq:Q-adam}, which adapt the vector $\vp$ and the matrix $\mQ$ to the case of the Adam optimizer. Assume that after a warm-up phase, the first- and second-moment estimates of Adam are $\vm$ and $\vv$, respectively. For a new gradient $\vg$, the Adam update rule can be written as:

\begin{equation}
\Delta \vtheta = - \eta \cdot \frac{\vm'}{\sqrt{\vv'} + \epsilon}
\end{equation}

where $\eta$ is the learning rate, and $\vm' = \frac{\beta_1 \vm + (1 - \beta_1)\vg}{1 - \beta_1^s}$ and $\vv' = \frac{\beta_2 \vv + (1 - \beta_2)\vg^2}{1 - \beta_2^s}$ are the updated moment estimates, with $(\beta_1, \beta_2)$ being the Adam beta values for first- and second-order estimate updates, and $s$ being the number steps the optimizer has already been trained for.

For a single update, we note that $\beta_2 \vv + (1 - \beta_2)\vg^2 \approx \vv$. That is because (1) the value $\beta_2$ is typically very close to 1, e.g., 0.995 or 0.999, and (2) due to the warm-up, $\vv$ is stabilized and is not expected to change much. This allows us to ignore the dependence of $\vv'$ on $\vg$, i.e., $\vv' \approx \frac{\vv}{1 - \beta_2^s}$ simplifying the computations. 

Enabled by this, we revisit the Taylor expansion in Equation \ref{eq:taylor}: 
\begin{align*}
    \vw^* &= \argmin_{\vw}~\loss \big(\mech^{\text{Adam}}(\vtheta; \dS, \vw);\dT, \mathbb{1}\big) \\
    &= \argmin_{\vw}~\loss (\vtheta - \frac{\eta}{|\dS|} \frac{\frac{\beta_1 \vm + (1-\beta_1)\mG_\dS(\vtheta) \vw}{1 - \beta_1^s}}{\sqrt{\frac{\vv}{1 - \beta_2^s}} + \epsilon}) \\
    &= \argmin_{\vw}~\loss (\vtheta - \frac{\eta}{|\dS|} [\frac{\beta_1 \vm}{(1-\beta_1^s) (\sqrt{\frac{\vv}{1 - \beta_2^s}} + \epsilon)} + \frac{(1-\beta_1) \mG_\dS(\vtheta) \vw}{(1-\beta_1^s) (\sqrt{\frac{\vv}{1 - \beta_2^s}} + \epsilon)}]) \numberthis \label{eq:taylor-adam-mid}
\end{align*}
Let $\va = \frac{(1-\beta_1)}{(1-\beta_1^s) (\sqrt{\frac{\vv}{1 - \beta_2^s}} + \epsilon)}$ and $\vb = \frac{\beta_1 \vm}{(1-\beta_1^s) (\sqrt{\frac{\vv}{1 - \beta_2^s}} + \epsilon)}$. Construct $\mG^{\text{Adam}}_\dS(\vtheta)$ by element-wise multiplying each column of $\mG_\dS(\vtheta)$ by $\va$. We can now continue Equation \ref{eq:taylor-adam-mid} by:
\begin{align*}
     \vw^* &= \argmin_{\vw}~\loss (\vtheta - \frac{\eta}{|\dS|} (\vb + \mG^{\text{Adam}}_\dS(\vtheta) \vw) \\
     &\approx \argmin_{\vw}~[\loss (\vtheta;\dT, \mathbb{1}) - \frac{\eta}{|\dS|} \vg_\dT^T(\vtheta) (\vb + \mG^{\text{Adam}}_\dS(\vtheta) \vw) + \\ & \quad\quad\quad\frac{\eta^2}{2~|\dS|^2} (\vb^T + \vw^T \mG^{\text{Adam}}_\dS(\vtheta)^T) \mH_\dT(\vtheta) (\vb + \mG^{\text{Adam}}_\dS(\vtheta) \vw) ] \\
    &= \argmin_{\vw}~ -(\vg_\dT^T(\vtheta) - \frac{\eta}{|\dS|} \vb^T \mH_\dT(\vtheta)) \mG^{\text{Adam}}_\dS(\vtheta) \vw \\
    &\quad\quad\quad+ \frac{\eta}{2~|\dS|} \vw^T \mG^{\text{Adam}}_\dS(\vtheta)^T \mH_\dT(\vtheta) \mG^{\text{Adam}}_\dS(\vtheta) \vw \\
    &= \argmin_{\vw}~ -\vp^{\text{Adam}}(\vtheta)^T \vw
    + \frac{\eta}{2} \vw^T \mQ^{\text{Adam}}(\vtheta)^T \vw \numberthis
\end{align*}
Where $\vp^{\text{Adam}}$ and $\mQ^{\text{Adam}}$ are defined in Equations \ref{eq:p-adam} and \ref{eq:Q-adam}, respectively.

\begin{figure}[t] % The * makes it span both columns
    \centering
    \begin{subfigure}{0.47\textwidth}
        \includegraphics[width=\linewidth]{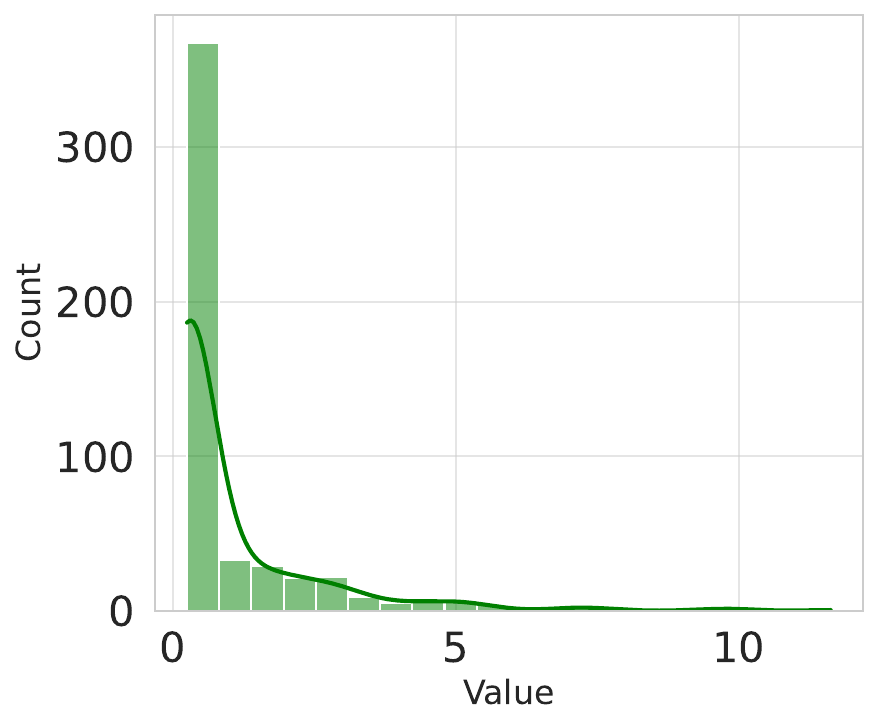}
    \end{subfigure}
    \begin{subfigure}{0.47\textwidth}
        \includegraphics[width=\linewidth]{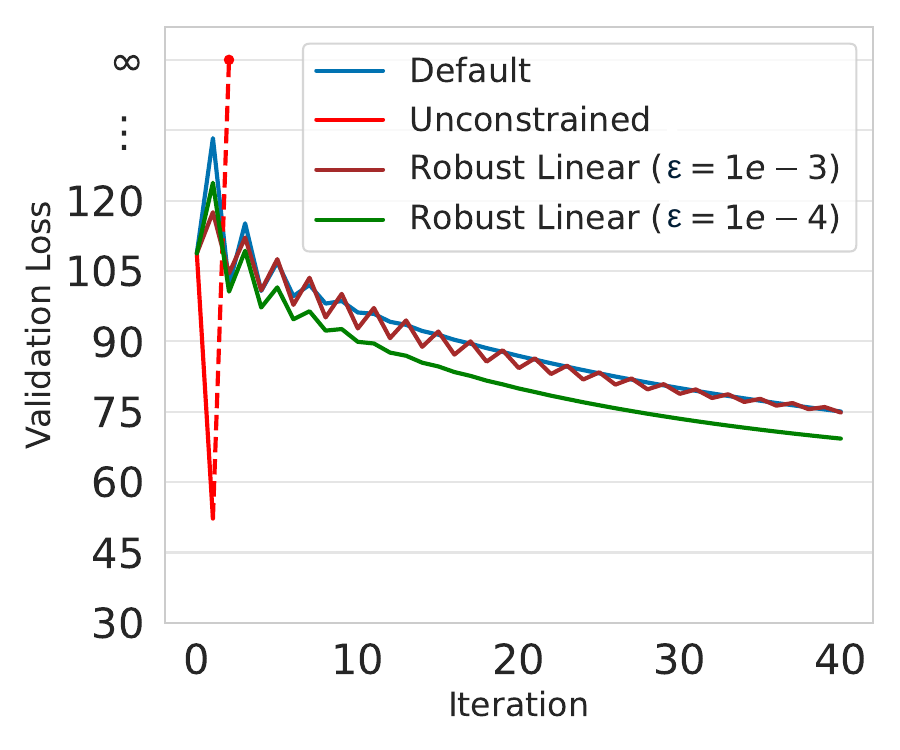}
    \end{subfigure}
    \caption{(Left) Distribution of theoretical robust weights for the linear case with $\epsilon=10^{-4}$, and (Right) validation loss during training with different variants in the running experiment setting.}
    \label{fig:robust-linear-eps}
\end{figure}

\section{Proof of Theorem \ref{cla:approx-grads}}
\label{apx:cluster-bound}
We begin by noting a property of the landmark-based approximation introduced in Section \ref{sec:landmark}: it exhibits \textit{rotational equivariance}. That is, if all source and target gradients are rotated by an orthonormal matrix, the resulting landmark-based gradient approximations will also be simply rotated by the same matrix.

In the remainder of this section, we prove two useful lemmas—Lemma \ref{lem:single-vect-sim} and Lemma \ref{lem:rotation-isotropic}. We then state and prove Theorem \ref{thm:approx-grad}, which bounds the error in the vector $\vp$ for any unbiased approximation that satisfies rotational equivariance. Finally, Corollary \ref{cor:approx-grads} bounds the difference in the resulting sample weights, thereby completing the proof of Theorem \ref{cla:approx-grads}.

\begin{lemma}
    \label{lem:single-vect-sim}
    Given unit vectors $\vg, \vt \mathbin{\in} \Roned{d}$, assume $\hat{\vg}\mathbin{=}\vg+\ve$ is a noisy approximation to $\vg$. Additionally, assume $\ve \mathbin{\in} \Roned{d}$ is a zero-mean, isotropic random vector, i.e., $\expected{\ve}=0$ and $\text{Cov}(\ve)=\sigma^2 \matr{I}_d$ for some $\sigma>0$. Let $S\mathbin{=}\langle \hat{\vg},\vt \rangle$.
    Then $\expected{S} \mathbin{=} \langle \vg,\vt \rangle$, and $\text{Var}(S)\mathbin{=}\frac{\expected{||\hat{\vg}-\vg||^2_2}}{d}$.
\end{lemma}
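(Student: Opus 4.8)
The plan is to compute the mean and variance of $S = \langle \hat{\vg}, \vt\rangle$ directly by substituting $\hat{\vg} = \vg + \ve$ and using the two stated properties of the noise $\ve$: zero mean and isotropic covariance $\sigma^2 \mI_d$. First I would write $S = \langle \vg, \vt\rangle + \langle \ve, \vt\rangle$. Since $\langle\vg,\vt\rangle$ is deterministic and $\expected{\langle \ve, \vt\rangle} = \langle \expected{\ve}, \vt\rangle = 0$ by linearity of expectation, we immediately get $\expected{S} = \langle \vg, \vt\rangle$. This also shows $S - \expected{S} = \langle \ve, \vt\rangle$, so the variance computation reduces to evaluating $\expected{\langle \ve, \vt\rangle^2}$.

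For the variance, I would use $\langle \ve, \vt\rangle^2 = \vt^T \ve \ve^T \vt$ and pull the expectation inside: $\text{Var}(S) = \vt^T \expected{\ve \ve^T} \vt = \vt^T (\sigma^2 \mI_d) \vt = \sigma^2 \|\vt\|_2^2 = \sigma^2$, using that $\vt$ is a unit vector and that $\expected{\ve\ve^T} = \text{Cov}(\ve) = \sigma^2\mI_d$ (the covariance equals the second moment since $\ve$ is zero-mean). It then remains to identify $\sigma^2$ with $\expected{\|\hat{\vg}-\vg\|_2^2}/d$. Since $\hat{\vg} - \vg = \ve$, we have $\expected{\|\ve\|_2^2} = \expected{\text{tr}(\ve\ve^T)} = \text{tr}(\expected{\ve\ve^T}) = \text{tr}(\sigma^2\mI_d) = d\sigma^2$, so $\sigma^2 = \expected{\|\hat{\vg}-\vg\|_2^2}/d$, and substituting back gives $\text{Var}(S) = \expected{\|\hat{\vg}-\vg\|_2^2}/d$ as claimed.

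There is no real obstacle here — the argument is a routine second-moment computation, and the only mild subtlety is being careful that the covariance and the raw second moment of $\ve$ coincide because $\ve$ has mean zero, and that the trace identity $\expected{\|\ve\|^2} = \text{tr}(\text{Cov}(\ve))$ is what connects the isotropy constant $\sigma^2$ to the MSE $\expected{\|\hat{\vg}-\vg\|^2}$. The unit-norm assumptions on $\vg$ and $\vt$ are not actually needed for this particular statement (only $\|\vt\|_2 = 1$ is used, for the variance), but I would keep them since they match the normalized-gradient convention used throughout the paper and will be needed in the downstream lemmas and in Theorem~\ref{thm:approx-grad}.
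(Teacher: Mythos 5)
Your proposal is correct and follows essentially the same route as the paper's proof: decompose $S=\langle\vg,\vt\rangle+\langle\ve,\vt\rangle$, use zero mean for the expectation, compute $\mathrm{Var}(S)=\vt^T\mathrm{Cov}(\ve)\vt=\sigma^2$ via isotropy and $\|\vt\|=1$, and identify $\sigma^2$ with the MSE through the trace identity $\mathbb{E}[\|\ve\|^2]=\mathrm{tr}(\mathrm{Cov}(\ve))=d\sigma^2$. Your side remark that only $\|\vt\|=1$ (not $\|\vg\|=1$) is actually used here is also accurate.
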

\begin{proof}
     The expectation of $S$ follows directly from zero-mean property of $\ve$. To bound its variance, let $\Sigma$ denote the covariance matrix of $\ve$. Since $\ve$ is isotropic, $\Sigma=\sigma^2 \matr{I}_d$ for some $\sigma$. 
     We can write:
    \begin{align*}
        \text{Var}(S) &= \text{Var} \langle \hat{\vg},\vt \rangle \\
        &= \text{Var} (\langle \vg,\vt \rangle + \langle \ve,\vt \rangle) \\
        &= \text{Var} (\langle \ve,\vt \rangle) \\
        &= \vt^T \Sigma \vt \\
        &= \sigma^2 ||\vt||^2 \\
        &= \sigma^2 \numberthis
    \end{align*}
    Also, 
    \begin{align*}
        \expected{||\hat{g} - g||^2} &= \expected{||\ve||^2} \\
        &= tr(\Sigma) \\
        &= d\sigma^2 \numberthis
    \end{align*}
    Hence $\sigma^2 = \frac{\expected{||\hat{g} - g||^2}}{d}$, which concludes the proof.
\end{proof}

\begin{lemma}
    \label{lem:rotation-isotropic}
    Assume $\vx \in \Roned{d}$ is a random vector from an arbitrary distribution. For any random orthonormal matrix of the form $\matr{R}=\matr{P} \matr{D}$, where 
    \begin{itemize}
        \item $\matr{P}$ is a random permutation matrix 
        \item $\matr{D}$ is a diagonal matrix with i.i.d. Rademacher signs ($\pm 1$)
    \end{itemize}
    the random vector $\vy = \matr{R} \vx$ is isotropic, i.e., $\text{Cov}(\vy)=\sigma^2 \matr{I}_d$ for some real value $\sigma$.
\end{lemma}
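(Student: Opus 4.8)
The plan is to show that $\mathrm{Cov}(\vy)$ is a scalar multiple of the identity by computing its entries directly and exploiting the two sources of randomness in $\mR = \mP\mD$ separately. First I would reduce to the case where $\vx$ is a fixed (deterministic) vector: since $\matr{R}$ and $\vx$ are independent, conditioning on $\vx$ and then averaging over its distribution preserves the property ``$\mathrm{Cov}$ is a scalar multiple of $\mI_d$'' provided the scalar coming out of the conditional computation does not depend on the \emph{direction} of $\vx$, only on $\|\vx\|$. So it suffices to prove the claim for an arbitrary fixed unit-norm $\vx$ (or fixed $\vx$, tracking the $\|\vx\|^2$ factor).

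Next I would compute $\mathbb{E}[\vy] = \mathbb{E}[\mP\mD]\vx$. The key observation is that $\mathbb{E}[\mD] = \vzero$ (each diagonal entry is a mean-zero Rademacher sign, independent of $\mP$), so $\mathbb{E}[\vy] = \vzero$ and hence $\mathrm{Cov}(\vy) = \mathbb{E}[\vy\vy^T] = \mathbb{E}[\mP\mD\vx\vx^T\mD\mP^T]$. I would then condition on $\mP$ and first take the expectation over $\mD$. Writing $\vu = \mD\vx$, the entries are $u_i = D_{ii} x_i$, so $\mathbb{E}[u_i u_j] = \mathbb{E}[D_{ii}D_{jj}] x_i x_j$, which equals $x_i^2$ when $i=j$ and $0$ otherwise by independence and mean-zero of the signs. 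Thus $\mathbb{E}_{\mD}[\mD\vx\vx^T\mD] = \mathrm{diag}(x_1^2,\dots,x_d^2)$. Substituting back, $\mathrm{Cov}(\vy) = \mathbb{E}_{\mP}[\mP \,\mathrm{diag}(x_1^2,\dots,x_d^2)\, \mP^T]$.

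The final step is to average over the random permutation. Conjugating a diagonal matrix $\mathrm{diag}(a_1,\dots,a_d)$ by a permutation matrix $\mP$ gives $\mathrm{diag}(a_{\pi^{-1}(1)},\dots,a_{\pi^{-1}(d)})$, so the result is always diagonal, and the $i$-th diagonal entry, averaged over a uniform random permutation, is $\frac{1}{d}\sum_{k=1}^d a_k$, the same for every $i$. Taking $a_k = x_k^2$ yields $\mathbb{E}_{\mP}[\mP\,\mathrm{diag}(x_k^2)\,\mP^T] = \left(\frac{1}{d}\sum_{k=1}^d x_k^2\right)\mI_d = \frac{\|\vx\|^2}{d}\mI_d$. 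Restoring the outer expectation over $\vx$ gives $\mathrm{Cov}(\vy) = \frac{\mathbb{E}[\|\vx\|^2]}{d}\mI_d$, i.e.\ $\sigma^2 = \mathbb{E}[\|\vx\|^2]/d$, which is of the claimed form.

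I do not anticipate a serious obstacle here; the only points requiring a little care are (i) justifying the independence-based factorization $\mathbb{E}[D_{ii}D_{jj}] = \mathbb{E}[D_{ii}]\mathbb{E}[D_{jj}]$ for $i \neq j$ and the fact that $\mD$ is independent of $\mP$, and (ii) making sure the reduction from random $\vx$ to fixed $\vx$ is legitimate — this is fine because $\vy\vy^T$ is bilinear in $\vx$ and $\vx \perp \mR$, so one can use the tower property, and the conditional covariance turns out to depend on $\vx$ only through $\|\vx\|^2$. If one instead wants to avoid the reduction entirely, the same computation goes through verbatim with $x_i x_j$ replaced by $\mathbb{E}[x_i x_j]$ throughout and $\|\vx\|^2$ by $\mathbb{E}[\|\vx\|^2]$ at the end.
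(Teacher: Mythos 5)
Your proposal is correct and follows essentially the same route as the paper's proof: condition on $\vx$, average over $\mD$ first to obtain $\mathrm{diag}(x_1^2,\dots,x_d^2)$, then average over the uniform permutation to equalize the diagonal and get $\frac{\|\vx\|^2}{d}\mI_d$. Your explicit check that $\mathbb{E}[\vy]=\vzero$ (so that $\mathrm{Cov}(\vy)=\mathbb{E}[\vy\vy^T]$) is a small step the paper leaves implicit, but otherwise the arguments coincide.
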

\begin{proof}
    We can write:
    \begin{align*}
        \text{Cov}(\matr{R} \vx) &= \mathbb{E}_{\matr{P}, \matr{D}, \vx}[\matr{R} \vx \vx^T \matr{R}^T] \\
        &= \mathbb{E}_{\matr{P}, \matr{D}, \vx}[\matr{P} \matr{D} \vx \vx^T \matr{D} \matr{P}^T] \\
        &= \mathbb{E}_{\vx}[\mathbb{E}_{\matr{P}}[\mathbb{E}_{\matr{D}} [\matr{P} \matr{D} \vx \vx^T \matr{D} \matr{P}^T ~|~ \matr{P}, \vx~] ~|~ \vx~]] \\
        &= \mathbb{E}_{\vx}[\mathbb{E}_{\matr{P}} [\matr{P} \mathbb{E}_{\matr{D}} [\matr{D} \vx \vx^T \matr{D} ~|~ \vx~] \matr{P}^T ~|~ \vx~]] \numberthis \label{eq:iso-lemma-eq1}
    \end{align*}
    Now note that $\mathbb{E}_{\matr{D}} [\matr{D} \vx \vx^T \matr{D} ~|~ \vx~] = \text{diag}(\vx^2)$. Substituting into the expectation over $\matr{P}$, we need to compute $\mathbb{E}_{\matr{P}} [\matr{P} \text{diag}(\vx^2) \matr{P}^T ~|~ \vx~]$. However, since $\matr{P}$ is a random permutation, off-diagonal elements are zero and for the diagonal elements, any element of $\vx^2$ can be picked with equal probability. Hence, the expectation over $\matr{P}$ equals $\frac{1}{d} ||\vx||^2 \matr{I}_d$.

    Putting it all together in Equation \ref{eq:iso-lemma-eq1}, we get 
    \begin{equation}
        \text{Cov}(\matr{R} \vx) = \frac{\expected{||\vx||^2}}{d} \matr{I}_d,
    \end{equation}
    which concludes the proof.
\end{proof}

\begin{theorem}
    \label{thm:approx-grad}
    Let $\mG \mathbin{\in} \Rtwod{n}{d}$ and $\vt \mathbin{\in} \Roned{d}$, with $\vt$ and each row of $\mG$ having unit lengths. Let $\vg_i$ denote the $i$'th row in $\mG$. Additionally, assume access to a (randomized) mapping function $\vh: \{\vg_1,\vg_2,\dots,\vg_n\} \mathbin{\rightarrow} \Roned{d}$, and let $\forall i \in \{1,2,\dots,n\}: \hat{\vg}_i\mathbin{=}\vh(\vg_i;\mG)$. Additionally, assume $\vh(.)$ satisfies:
    \begin{enumerate}
        \item Unbiased: $\forall i \in \{1,2,\dots,n\}: \expected{\hat{\vg}_i} = \vg_i$, i.e., $\vh(.)$ is unbiased.
        \item Bounded Average Mean Squared Error: Let $\delta_i^2 = \mathbb{E}[||\hat{\mathbf{g}}_i-\mathbf{g}_i||^2]$. Then:
        $$\frac{1}{n} \sum_{i=1}^{n} \delta_i^2 \le \Delta^2$$
        for some $\Delta^2 \ge 0$.
        \item Rotation Equivariance: For any orthonormal rotation matrix $\matr{R}\mathbin{\in}\Rtwod{d}{d}$ and $\forall i \in \{1,2,\dots,n\}: \vh(\matr{R} \vg_i; \mG \matr{R}) \mathbin{=} \matr{R} \hat{\vg}_i$.
    \end{enumerate}
    Construct the vector $\vp \mathbin{=} [p_1, p_2, \dots, p_n]^T$ such that $p_i\mathbin{=}\langle \vg_i, \vt \rangle$. Similarly define $\hat{\vp}\mathbin{=}[\hat{p}_i, \hat{p}_2, \dots, \hat{p}_n]^T$, where $\hat{p}_i\mathbin{=}\langle \hat{\vg}_i, \vt \rangle$. Then 
    \begin{equation}
        \expected{||\vp-\hat{\vp}||^2} \le \frac{n \Delta^2}{d}
    \end{equation}
\end{theorem}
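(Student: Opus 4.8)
The plan is to reduce the bound to a statement about the second moment of a single noisy inner product and then apply Lemmas~\ref{lem:single-vect-sim} and~\ref{lem:rotation-isotropic}. Write $\ve_i \mathbin{=} \hat{\vg}_i - \vg_i$ for the $i$-th error vector, so that $\hat{p}_i - p_i = \langle \hat{\vg}_i, \vt \rangle - \langle \vg_i, \vt \rangle = \langle \ve_i, \vt \rangle$ and hence $\|\vp - \hat{\vp}\|^2 = \sum_{i=1}^{n} \langle \ve_i, \vt \rangle^2$. By linearity of expectation it suffices to prove the per-sample estimate $\expected{\langle \ve_i, \vt \rangle^2} \le \delta_i^2/d$ for every $i$; summing over $i$ and using the Bounded Average Mean Squared Error hypothesis $\tfrac1n \sum_i \delta_i^2 \le \Delta^2$ then gives $\expected{\|\vp - \hat{\vp}\|^2} \le \tfrac1d \sum_i \delta_i^2 \le n\Delta^2/d$, which is exactly the claim.

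The per-sample estimate is the heart of the proof, and the two lemmas are tailored to deliver it by first reducing to the case where $\ve_i$ is isotropic. I would introduce an auxiliary random orthonormal matrix $\matr{R} = \matr{P}\matr{D}$ — a uniform permutation matrix $\matr{P}$ times a diagonal $\matr{D}$ of i.i.d.\ Rademacher signs — drawn independently of the internal randomness of $\vh$. Rotation equivariance of $\vh$ (also the property of the landmark construction recorded at the start of this appendix) identifies the error made by $\vh$ on a randomly rotated copy of the gradient matrix with $\matr{R}\ve_i$; one then checks — this is the delicate step, flagged below — that passing to this randomly rotated copy leaves $\expected{\|\vp - \hat{\vp}\|^2}$ unchanged, so for the purpose of bounding that quantity we may replace $\ve_i$ by $\matr{R}\ve_i$. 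Now apply Lemma~\ref{lem:rotation-isotropic} with $\vx = \ve_i$: since $\expected{\ve_i} = \vzero$ by unbiasedness (hence $\expected{\matr{R}\ve_i} = \vzero$), the vector $\matr{R}\ve_i$ is zero-mean and isotropic, with $\mathrm{Cov}(\matr{R}\ve_i) = \tfrac{\expected{\|\ve_i\|^2}}{d}\,\matr{I}_d = \tfrac{\delta_i^2}{d}\,\matr{I}_d$.

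With the error in isotropic, zero-mean form with variance parameter $\sigma_i^2 = \delta_i^2/d$, I would invoke Lemma~\ref{lem:single-vect-sim}: it yields $\expected{\langle \hat{\vg}_i, \vt \rangle} = \langle \vg_i, \vt \rangle = p_i$ and $\mathrm{Var}(\langle \hat{\vg}_i, \vt \rangle) = \expected{\|\ve_i\|^2}/d = \delta_i^2/d$, so $\expected{(\hat{p}_i - p_i)^2} = \mathrm{Var}(\hat{p}_i) = \delta_i^2/d$. Substituting into the sum from the first paragraph finishes the proof; the weight bound in Corollary~\ref{cor:approx-grads}, and hence the informal Theorem~\ref{cla:approx-grads}, then follows by propagating this $\vp$-error estimate through the (first-order) \methodname{} solution map.

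The step I expect to be the main obstacle is the reduction itself: arguing cleanly that inserting the auxiliary rotation $\matr{R}$ leaves the target $\vt$ and the objective $\expected{\|\vp - \hat{\vp}\|^2}$ untouched while genuinely rendering $\ve_i$ isotropic against that fixed $\vt$ — i.e. pinning down that rotation equivariance of $\vh$ is precisely what buys the dimensional factor $1/d$. Everything else is routine: the identity $\|\vp - \hat{\vp}\|^2 = \sum_i \langle \ve_i, \vt \rangle^2$, the covariance computation, and the two lemmas. It is also worth isolating where each hypothesis enters — unbiasedness forces $\ve_i$ to be mean-zero (needed both so that $\expected{(\hat{p}_i - p_i)^2}$ equals $\mathrm{Var}(\hat p_i)$ and so that Lemma~\ref{lem:rotation-isotropic} produces a genuine covariance), the average-MSE bound supplies $\Delta^2$, and rotation equivariance is the sole source of the dimensional gain, since without it Cauchy--Schwarz only gives the far weaker $\expected{\|\vp - \hat{\vp}\|^2} \le n\Delta^2$.
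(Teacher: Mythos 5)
Your proposal follows the paper's proof essentially step for step: the coordinate-wise decomposition $\|\vp-\hat{\vp}\|^2=\sum_i\langle\ve_i,\vt\rangle^2$, the auxiliary rotation $\matr{R}=\matr{P}\matr{D}$ combined with rotation equivariance and Lemma~\ref{lem:rotation-isotropic} to reduce to isotropic errors, and then Lemma~\ref{lem:single-vect-sim} applied per coordinate before summing against the bounded-MSE hypothesis. The only place your write-up diverges is precisely the step you flag as delicate: the paper performs a \emph{joint} change of variables $\mG\leftarrow\mG\matr{R}$, $\vt\leftarrow\matr{R}\vt$ (rotating the target along with the gradients), so that every inner product---and hence the objective---is literally unchanged while the error vectors become isotropic, whereas replacing $\ve_i$ by $\matr{R}\ve_i$ against a \emph{fixed} $\vt$, as you describe, does in general change $\expected{\langle\ve_i,\vt\rangle^2}$ (take $\ve_i=\pm\delta_i\vt$), so the reduction must be phrased as the joint rotation rather than a modification of the errors alone.
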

\begin{proof}
    For all $i$, let $\ve_i \mathbin{=} \hat{\vg}_i - \vg_i$ denote the error. By the \textit{Unibased} assumption, $\expected{\ve_i}=\vect{0}$.

    Without loss of generality, we can assume that for all $i$, the vector $\ve_i$ is isotropic, i.e., $\text{Cov}(\ve_i)$ is a scalar multiple of the identity matrix. If this is not the case, we take advantage of Lemma \ref{lem:rotation-isotropic} and apply a change of variables: $\mG \leftarrow \mG \matr{R}$ and $\vt \leftarrow \matr{R} \vt$, where $\matr{R} = \matr{P} \matr{D}$, $\matr{P}$ is a permutation matrix, and $\matr{D}$ is a diagonal matrix with entries chosen uniformly at random from $\{\pm 1\}$. Note that by the \textit{Rotation Equivariance} assumption, this transformation implies $\hat{\vg}_i \leftarrow \matr{R} \hat{\vg}_i$. Under this transformation, the error vectors $\ve_i$ are mapped into a space where they become isotropic, and the pairwise dot products and distances remain unchanged as $\matr{R}$ is orthonormal.
    
    Now we can directly apply Lemma $\ref{lem:single-vect-sim}$ for each coordinate $i$: $\expected{\hat{p}_i}\mathbin{=}p_i$ and $\text{Var}(\hat{p}_i)=\frac{\expected{||\hat{g} - g||^2}}{d}$. This means:
    \begin{align*}
        \expected{||\vp-\hat{\vp}||^2} &= \sum_{i=1}^n \expected{(p_i - \hat{p_i})^2} \\
        &= \sum_{i=1}^n \text{Var}(\hat{p}_i) \\
        &= \sum_{i=1}^n \frac{\expected{||\hat{g} - g||^2}}{d} \\
        &\le \frac{n \Delta^2}{d}
    \end{align*}
    where the last inequality comes from the \textit{Bounded Average Mean Squared Error} assumption.
\end{proof}

\begin{corollary}
    \label{cor:approx-grads}
    In the setting of Theorem \ref{thm:approx-grad}, if we define:
    \begin{equation}
        \vw(\vp) = \argmin_{\vx}~ -\vp^T \vx + \frac{\lambda}{2} \|\vx\|_2^2,~~s.t.~ \left\{\begin{matrix} \vx \ge 0 \\ \vx^T \vone = n \\ \end{matrix}\right.
    \end{equation}
    then 
    \begin{equation}
        \expected{||\vw(\vp) - \vw(\hat{\vp})||^2} \le \frac{n \Delta^2}{\lambda^2 d}.
    \end{equation}
\end{corollary}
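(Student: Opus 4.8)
The plan is to bound $\expected{\|\vw(\vp) - \vw(\hat\vp)\|^2}$ by first establishing that the map $\vp \mapsto \vw(\vp)$ is Lipschitz with constant $1/\lambda$ in the Euclidean norm, and then combining this with the bound $\expected{\|\vp - \hat\vp\|^2} \le n\Delta^2/d$ already proved in Theorem \ref{thm:approx-grad}. The Lipschitz estimate is the crux; once we have it, the corollary follows immediately from $\expected{\|\vw(\vp)-\vw(\hat\vp)\|^2} \le \frac{1}{\lambda^2}\expected{\|\vp-\hat\vp\|^2} \le \frac{n\Delta^2}{\lambda^2 d}$.

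To prove the Lipschitz property, I would argue as follows. The objective $g_\vp(\vx) = -\vp^T\vx + \frac{\lambda}{2}\|\vx\|_2^2$ is $\lambda$-strongly convex, and the feasible set $K = \{\vx \ge 0,\ \vx^T\vone = n\}$ is closed and convex, so $\vw(\vp) = \argmin_{\vx \in K} g_\vp(\vx)$ is the unique minimizer. In fact $\vw(\vp)$ is exactly the Euclidean projection of $\frac{1}{\lambda}\vp$ onto $K$: completing the square gives $g_\vp(\vx) = \frac{\lambda}{2}\|\vx - \tfrac{1}{\lambda}\vp\|_2^2 - \frac{1}{2\lambda}\|\vp\|_2^2$, so minimizing $g_\vp$ over $K$ is the same as minimizing $\|\vx - \tfrac1\lambda \vp\|_2^2$ over $K$. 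Since Euclidean projection onto a convex set is $1$-Lipschitz (nonexpansive), we get
\begin{equation}
\|\vw(\vp) - \vw(\hat\vp)\| = \big\|\Pi_K(\tfrac1\lambda\vp) - \Pi_K(\tfrac1\lambda\hat\vp)\big\| \le \big\|\tfrac1\lambda\vp - \tfrac1\lambda\hat\vp\big\| = \tfrac1\lambda\|\vp - \hat\vp\|.
\end{equation}
Squaring and taking expectations then yields $\expected{\|\vw(\vp)-\vw(\hat\vp)\|^2} \le \frac{1}{\lambda^2}\expected{\|\vp-\hat\vp\|^2}$, and plugging in the Theorem \ref{thm:approx-grad} bound finishes the proof.

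An alternative route avoiding the projection identity, in case one prefers a self-contained first-order argument, is to use the variational inequality characterizing each minimizer: for all $\vx \in K$, $\langle \nabla g_\vp(\vw(\vp)), \vx - \vw(\vp)\rangle \ge 0$, i.e. $\langle -\vp + \lambda\vw(\vp),\ \vx - \vw(\vp)\rangle \ge 0$. Instantiating this at $\vx = \vw(\hat\vp)$ and the analogous inequality for $\hat\vp$ at $\vx = \vw(\vp)$, then adding the two, the cross terms telescope and one obtains $\lambda\|\vw(\vp)-\vw(\hat\vp)\|^2 \le \langle \vp - \hat\vp,\ \vw(\vp)-\vw(\hat\vp)\rangle \le \|\vp-\hat\vp\|\,\|\vw(\vp)-\vw(\hat\vp)\|$ by Cauchy–Schwarz, giving the same $1/\lambda$ Lipschitz bound. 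I expect the only mild subtlety to be confirming that the minimizer is well-defined and unique (strong convexity plus nonempty compact feasible set handles this) and keeping the direction of the variational inequalities straight; everything else is routine.
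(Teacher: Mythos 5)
Your proposal is correct, and both of your routes work. Your primary argument --- completing the square to recognize $\vw(\vp)$ as the Euclidean projection of $\tfrac{1}{\lambda}\vp$ onto the simplex-like set $K$, then invoking nonexpansiveness of projections onto convex sets --- is a genuinely different (and arguably slicker) way to obtain the key $\tfrac{1}{\lambda}$-Lipschitz bound than what the paper does. The paper instead writes the $\lambda$-strong-convexity inequality at $\vw(\vp)$ and at $\vw(\hat{\vp})$, uses first-order optimality to discard the gradient terms, adds the two inequalities to get $(\vp-\hat{\vp})^T(\vw(\vp)-\vw(\hat{\vp})) \ge \lambda\|\vw(\vp)-\vw(\hat{\vp})\|^2$, and finishes with Cauchy--Schwarz; this is essentially identical to the ``alternative route'' you sketch at the end. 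The projection view buys a one-line proof of the Lipschitz property by citing a standard fact, and it additionally makes the structure of the solution transparent (it is a projection onto the scaled simplex, consistent with the sparsity discussion in Appendix~\ref{apx:closed-form}); the paper's strong-convexity argument is more self-contained and generalizes more readily to the case where the quadratic term is $\frac{\eta}{2}\vx^T\mQ\vx + \frac{\lambda}{2}\|\vx\|^2$ rather than a pure multiple of the identity. The concluding step --- squaring the pointwise Lipschitz inequality, taking expectations, and plugging in $\expected{\|\vp-\hat{\vp}\|^2} \le n\Delta^2/d$ from Theorem~\ref{thm:approx-grad} --- matches the paper exactly.
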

\begin{proof}
    Let $F_p(x)=-\vp^T \vx + \frac{\lambda}{2} \|\vx\|_2^2$ and $C = \{\vx \in \Roned{d}: \vx \ge 0, \vx^T \vone = n\}$.
    Note that the objective above has a unique solution since $F_p$ is $\lambda$-strongly convex and $C$ is a convex set independent of $\vp$.

    By strong convexity, $\forall x, y \in \Roned{d}$:
    \begin{equation}
        F_p(\vect{y}) \ge F_p(\vx) + \nabla_x F_p(\vx)^T (\vect{y} - \vx) + \frac{\lambda}{2} ||\vect{y}-\vx||^2
    \end{equation}
    Set $\vx = \vw := \vw(\vp)$ and $\vect{y} = \hat{\vw} := \vw(\hat{\vp})$. Since $\vw$ minimizes $F_p$ over $C$, $\nabla_x F_p(\vx)^T (\vect{y} - \vw) \ge 0$. Hence:
    \begin{equation}
        F_p(\hat{\vw}) \ge F_p(\vw) + \frac{\lambda}{2} ||\vw - \hat{\vw}||^2
    \end{equation}
    Swapping $\vw$ and $\hat{\vw}$,
    \begin{equation}
        F_{\hat{p}}(\vw) \ge F_{\hat{p}}(\hat{\vw}) + \frac{\lambda}{2} ||\vw - \hat{\vw}||^2
    \end{equation}
    Adding the two equations above:
    \begin{equation}
        (\vp - \hat{\vp})^T (\vw - \hat{\vw}) \ge \lambda ||\vw - \hat{\vw}||^2
    \end{equation}
    Applying Cauchy-Schwarz on the left hand side, we get
    \begin{equation}
        ||\vp - \hat{\vp}|| \cdot ||\vw - \hat{\vw}|| \ge \lambda ||\vw - \hat{\vw}||^2
    \end{equation}
    Hence
    \begin{equation}
        ||\vw - \hat{\vw}|| \le \frac{1}{\lambda} ||\vp - \hat{\vp}||
    \end{equation}
    Combining with the result of Theorem \ref{thm:approx-grad}:
    \begin{equation}
        \expected{||\vw - \hat{\vw}||^2} \le \frac{n \Delta^2}{\lambda^2 d}.
    \end{equation}
\end{proof}

\section{Dataset and Model Details}
\label{apx:datasets-models}

This section provides details on the datasets and models used throughout the paper.

\subsection{Datasets}
For the datasets, we largely follow the setup of \citet{ivison2025large}.

\paragraph{Tulu V2 (ODC-BY License).}
The Tulu V2 dataset \citep{tulu2}, also known as the Tulu V2 SFT Mixture, is a comprehensive instruction-tuning dataset. Following \citet{ivison2025large}, we consider the unfiltered version with 5.8M samples, consisting of 961,322 samples from FLAN v2 \citep{flan}, 398,439 samples from FLAN CoT \citep{flan}, 7,707 samples from Open Assistant \citep{openassistant}, 15,007 from Dolly \citep{dolly}, 52,002 from GPT-4 Alpaca \citep{gpt4-alpaca}, 20,022 from Code Alpaca \citep{codealpaca}, 100,054 from ShareGPT, 1,030 from LIMA \citep{lima}, 142,802 from Wizard Evol-Instruct V2 \citep{wizardlm}, 4,111,858 from Open Orca \citep{openorca}, 7,535 from SciRIFF \citep{sciriff}, and 14 from Hardcoded. For more information, we refer the reader to \citet{ivison2025large}.

\paragraph{MMLU (MIT License).}
The Massive Multitask Language Understanding (MMLU) dataset \citep{mmlu1,mmlu2} consists of challenging multiple-choice questions from 57 topics, such as abstract algebra, astronomy, machine learning, and more. It includes 5 development samples per category and a total of 14,042 test samples. We use the development samples as our target set and evaluate the final model zero-shot on the test set.

\paragraph{GSM8K (MIT License).}
This dataset comprises grade school math questions, with 7.47k training and 1.32k test samples \citep{gsm8k}. We evaluate the models on the test set using 8 examples in the context (8-shot evaluation) and use the same 8 individual samples as the target set. As is standard, only the final answer to each question is considered.

\paragraph{Big-Bench-Hard (MIT License).}
This dataset includes questions from 27 challenging tasks, such as causal judgment, multi-step arithmetic, and logic. Following \citet{bbh}, we perform 3-shot evaluations using the same 3 samples per category (a total of 81) as the target set.

\paragraph{TyDIQA (Apache-2.0 License).}
TyDIQA is a dataset of 204k question-answering samples across 11 languages \citep{tydiqa}. For evaluation, we follow \citet{ivison2025large}, which in turn follows \citet{anil2023palm}, using 1-shot prompting. We select 9 samples per language for the target set.

\paragraph{Codex (MIT License).}
This dataset contains 164 Python programming questions \citep{codex}, of which 16 are used as the target set and the remaining as the test set. See \citet{ivison2025large} for additional evaluation details.

\paragraph{SQuAD (CC BY-SA 4.0 License).}
The Stanford Question Answering Dataset (SQuAD) \citep{squad} contains reading comprehension questions based on Wikipedia articles. We use 500 random samples from the training split as the target set. We perform 3-shot evaluations with three samples randomly selected from the training set.

\subsection{Model Licenses}
In this paper, we utilize LLaMA 2 \citep{llama2}, LLaMA 3.2 3B \citep{llama3}, Qwen 2.5 1.5B \citep{qwen2.5}, and Qwen 2.5 3B \citep{qwen2.5} models. These models are distributed under the LLaMA 2 Community License, LLaMA 3.2 Community License, Apache-2.0 License, and Qwen Research License, respectively.

\section{Embeddings Study}
\label{apx:embds}
In Section \ref{sec:landmark}, we noted that existing embedding functions are insufficient for our landmark-based gradient approximations and introduced the JVP embeddings as an alternative. In this section, we compare different embedding functions in two settings. In all the experiments, the model we consider is Llama-2 7B \citep{llama2}.

\paragraph{Gradient Recovery.}
First, we randomly take 200k samples from Tulu V2 \citep{tulu2} and embed them using various embedding functions. We then use a small number of landmark gradient samples (selected uniformly at random) to approximate the gradients for all data points, following the method described in Section \ref{sec:landmark}. This process is repeated for different numbers of landmarks to evaluate how performance varies with landmark count. We report the average cosine similarity between the approximated gradients and the true gradients (projected into 8192-dimensional space using Rademacher-based projections \citep{ivison2025large, trak}) for each case.

We evaluate several embedding functions: the RDS+ embeddings from \citet{ivison2025large}, NVIDIA’s NV-Embed-v2 \citep{lee2024nv}, GTR-base \citep{gtr}, and our proposed JVP-based approach using two random vectors and four transformer blocks.

As a lower bound, we also include a \textit{Trivial} embedding: here, we assume that the gradients for the landmark samples are perfectly recovered, while the gradients for all other samples are treated as completely random.

Figure \ref{fig:embd-study} (Left) presents a comparison of these embedding functions. Our JVP embeddings outperform all other methods, including the more computationally intensive RDS+ and NV-Embed-v2.

Finally, we compute an upper bound by using the true projected gradients as the embedding function and repeating the same experiment. As shown in Figure \ref{fig:embd-study} (Right), this idealized setting quickly achieves high accuracy in gradient approximation—surpassing 0.9 cosine similarity with just over 4096 landmarks. This suggests that the gradients are approximately low-rank, a known phenomenon in LLMs \citep{hu2022lora, zhao2024galore}.

\begin{figure}[t] % The * makes it span both columns
    \centering
    \begin{subfigure}{0.47\textwidth}
        \includegraphics[width=\linewidth]{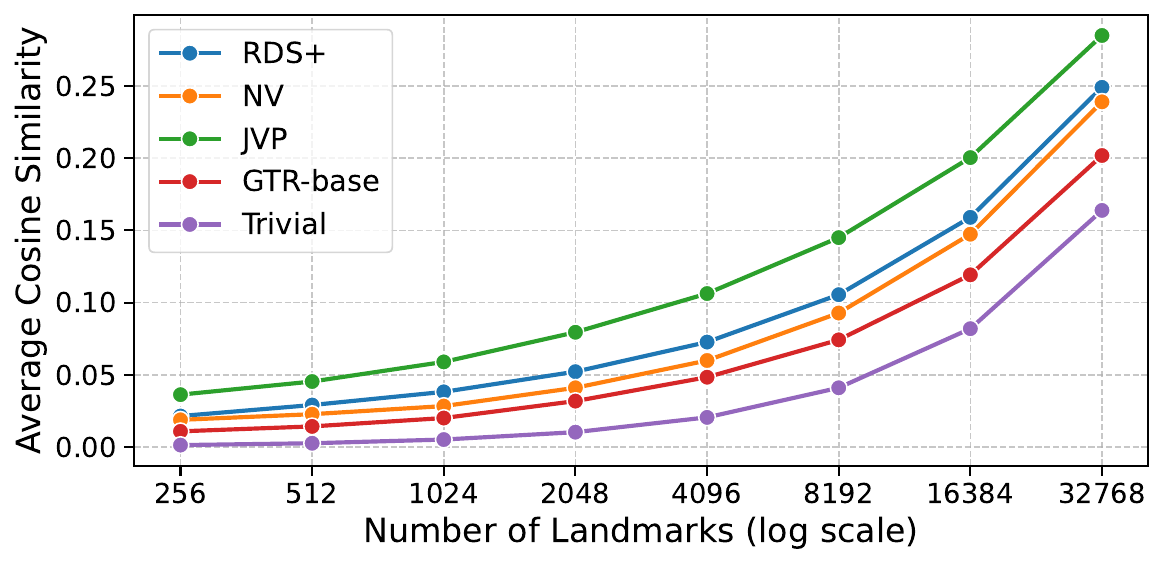}
    \end{subfigure}
    \begin{subfigure}{0.47\textwidth}
        \includegraphics[width=\linewidth]{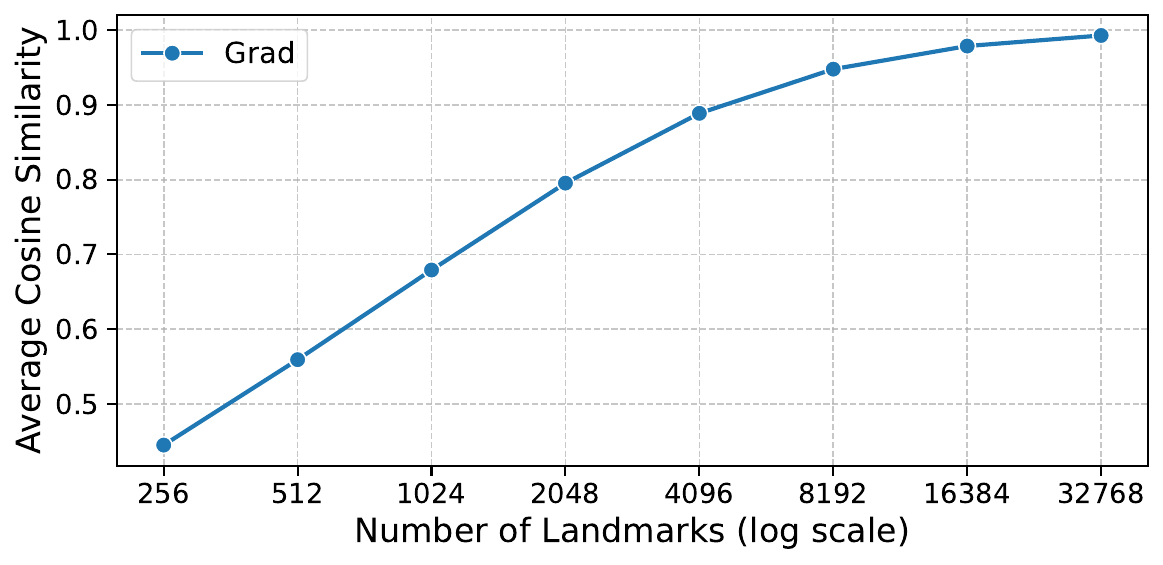}
    \end{subfigure}
    \caption{(Left) Gradient direction recovery vs number of landmarks, when different proxy embdeding functions are used, and (Right) gradient direction recovery when the actual gradients are used as an ideal embedding.}
    \label{fig:embd-study}
\end{figure}

\paragraph{End-to-end Selection and Training.}
We repeat the selection and fine-tuning experiments from Table \ref{tab:main-exp}, this time replacing the JVP embeddings with either GTR-base or true gradient embeddings. Table \ref{tab:e2e-embd} reports the resulting accuracy for each task. Due to the high computational cost of obtaining true gradients, we include only a single random seed for this setting.

We fix the number of landmarks to 4096 across all experiments. The results show that while GTR-base consistently underperforms, the JVP and true gradient embeddings yield comparable accuracy—falling within each other's standard deviation in most cases. This indicates that the gradient approximations provided by JVP embeddings are sufficiently accurate for end-to-end training.

Finally, we note that since Figure \ref{fig:embd-study} (Right) demonstrates near-perfect gradient recovery using the Grad embedding, the corresponding row in Table \ref{tab:e2e-embd} closely mirrors the performance of the LESS method \citep{less}.

\begin{table*}[t]
\centering
\caption{Accuracy ($\pm$ standard deviation) of Llama2-7B across six tasks when using \methodname{} with different embeddings to select 10k samples from a pool of 200k in the Tulu V2 dataset~\citep{tulu2}. The number of landmarks is fixed at 4096.}
\scalebox{0.65}{
\begin{tabular}{l|lcccccc|c}
\toprule
\textbf{Model} & \textbf{Embedding} & \textbf{MMLU} & \textbf{GSM8k} & \textbf{BBH} & \textbf{TyDIQA} & \textbf{CODEX} & \textbf{SQuAD} & \textbf{Avg. $\Delta$ w/ Uniform} \\
\midrule
\multirow{3}{*}{Llama2-7B} 
    & GTR-base  & 46.7 $\pm$ 0.17 & 18.7 $\pm$ 0.27 & 42.8 $\pm$ 0.34 & 52.2 $\pm$ 0.56 & 29.3 $\pm$ 0.84 & 82.1 $\pm$ 0.30 & 45.3 \\
    & JVP  & 48.3 $\pm$ 0.21 & 20.3 $\pm$ 1.65 & 43.2 $\pm$ 0.67 & 53.6 $\pm$ 0.34 & 29.5 $\pm$ 3.14 & 83.2 $\pm$ 1.02 & 46.4\\
    & Grad  & 48.3 & 20.2 & 43.7 & 51.7 & 27.7 & 84.5 & 46.0 \\
\bottomrule
\end{tabular}
}
\label{tab:e2e-embd}
\end{table*}

\section{An Active-Set Solution}
\label{apx:closed-form}
In this appendix we derive the solution to the \methodname{} objective under the assumption that $\eta \mQ + \lambda \matr{I}$ is positive definite (PD). This setting includes the special first-order case used in the main body of the paper, where $\eta \to 0$. Concretely, we solve
\begin{equation}
    \vw^* = \argmin_{\vw}~-\vp^T \vw + \frac{\eta}{2} \vw^T \mQ \vw + \frac{\lambda}{2} \vw^T \vw,
    ~~s.t.~ \left\{
        \begin{matrix}
            \vw \ge 0 \\
            \vw^T \vone = n \\
        \end{matrix}
    \right.
\end{equation}
where $n$ denotes the dimension of $\vw$ and $\eta \mQ + \lambda \matr{I} \succ \vzero$.

Introduce the Lagrange multipliers $\tau\in\mathbb{R}$ for the equality constraint and $\valpha \in \Roned{n}_{\ge 0}$ for the non-negativity constraints. The Lagrangian is
\begin{equation}
    L(\vw,\tau,\valpha)
    = -\vp^\top\vw
      + \frac{\eta}{2}\vw^\top\mQ\vw
      + \frac{\lambda}{2}\vw^\top\vw
      - \tau\bigl(\vone^\top\vw-n\bigr)
      - \valpha^\top\vw .
\end{equation}

Differentiating $L$ with respect to $\vw$ and setting it equal to zero yields
\begin{equation}
    \eta \mQ \vw + \lambda \vw - \vp - \tau \vone - \valpha = \vzero .
\end{equation}
Let $\mR \coloneqq \eta\mQ+\lambda\matr I \succ \vzero$. Then
\begin{equation}
\label{eq:kkt-stationary}
    \mR \vw - \vp - \tau \vone - \valpha = \vzero .
\end{equation}

By complementary slackness, $\forall i:\, \vw_i \valpha_i = 0$.  
Let $A \!=\!\{i \mid w_i = 0\}$ be the active set and $B$ its complement.  
Restricting~\eqref{eq:kkt-stationary} to the free indices gives
\begin{equation}
    \mR_{BB}\vw_{B} = \vp_{B} + \tau \vone_{B}.
\end{equation}

Because $\mR_{BB}$ is a principal sub-matrix of the PD matrix $\mR$, it is itself PD. Hence
\begin{equation}
    \vw(\tau;B) = \mR_{BB}^{-1} (\vp_{B} + \tau \vone_{B}) .
\end{equation}

Enforcing $\vone^T \vw = n$ determines $\tau$:
\begin{equation}
    \vone_{B}^T \mR_{BB}^{-1} (\vp_{B} + \tau \vone_{B}) = n ,
\end{equation}
and therefore
\begin{equation}
    \label{eq:kkt-tau}
    \tau^* = \frac{n-\vone_{B}^T \mR_{BB}^{-1} \vp_{B}}
                   {\vone_{B}^T \mR_{BB}^{-1} \vone_{B}} .
\end{equation}

Substituting $\tau^*$ back into $\vw(\tau;B)$ gives us the weights on $B$:
\begin{equation}
\label{eq:wB_star_continued}
    \vw_B^* = \mR_{BB}^{-1} \Bigl(
        \vp_{B}
      + \bigl(
            \frac{n-\vone_{B}^T \mR_{BB}^{-1} \vp_{B}}
                 {\vone_{B}^T \mR_{BB}^{-1} \vone_{B}}
        \bigr) \vone_{B}
    \Bigr) .
\end{equation}
For indices in the active set $A$ we have $\vw_A^* = \vzero$, giving the final candidate solution $\vw^* = (\vw_A^*, \vw_B^*)$.

Optimality requires that the remaining Karush–Kuhn–Tucker (KKT) conditions hold, namely
$\forall i\in B,\ \vw_i \ge 0$ (primal feasibility) and
$\forall j\in A,\ \valpha_j \ge 0$ (dual feasibility).
Because the objective is convex ($\mR \succ 0$), any partition $A,B$ satisfying these conditions is the global optimum.

Examining the coordinates in $A$ in~\eqref{eq:kkt-stationary} gives
\begin{equation}
    \label{eq:kkt-alpha}
    \valpha_A = (\mR_{AB} \vw_B^*)_A - \vp_A - \tau^* \vone_A .
\end{equation}

Problems of this type are typically solved with a primal–dual active-set algorithm.
We start from the feasible point $\vw=\vone$ (so $A=\varnothing$, $B=\{1,\ldots,n\}$) and repeat:

\begin{enumerate}
    \item Solve for $\vw_B^*$ via~\eqref{eq:wB_star_continued}.  
    \item If any component of $\vw_B^*$ is negative, move its index to $A$.  
    \item Compute $\valpha_A$; if any component is negative, move its index back to $B$.  
\end{enumerate}

Each move strictly decreases the objective, and with only finitely many index sets the algorithm terminates once all components of $\vw_B$ and $\valpha_A$ are non-negative.

\paragraph{The Special Case of $\eta \rightarrow 0$.}
This setting corresponds to the first-order \methodname{} variant used throughout the main body of the paper. In this case, we demonstrate that as $\lambda$ increases, the solution $\vw^*$ becomes denser—that is, it contains more non-zero elements. This observation is leveraged in Section \ref{sec:tuning-lambda} for tuning the parameter $\lambda$.

When $\eta \rightarrow 0$, we can write $\mR = \lambda \mI$, which implies $\mR_{BB}^{-1} = \frac{1}{\lambda} \mI$ and $\mR_{AB} = \vzero$. Substituting these into Equations \ref{eq:kkt-tau}, \ref{eq:wB_star_continued}, and \ref{eq:kkt-alpha}, we obtain:

\begin{equation}
\tau^* = \frac{n \lambda - \vone_{B}^T \vp_{B}}{|B|}
\end{equation}

\begin{equation}
\vw_B^* = \frac{1}{\lambda}(\vp + \tau^* \vone)_{B}
\end{equation}

\begin{equation}
\valpha_A = - (\vp + \tau^* \vone)_A
\end{equation}

Since both $\vw_B$ and $\valpha_A$ must be non-negative, the last two equations imply that the active set $B$ must satisfy $B = \{i : \vp_i \ge -\tau^*\}$, i.e., $B$ is necessarily a set of top-k elements from $\vp$ for some $k$.

Consider two values $\lambda_1 < \lambda_2$, and let $B_1$ and $B_2$ denote their optimal supports with sizes $k_1$ and $k_2$, and $\vw^{(1)}$ and $\vw^{(2)}$ their respective optimal weight vectors; similarly, let $\valpha^{(1)}$ and $\valpha^{(2)}$ denote their associated dual variables. Suppose for contradiction that $k_2 < k_1$. Note that $B_1$ consists of the indices of the top $k_1$ elements in $\vp$, while $B_2 \subset B_1$ includes the top $k_2$ elements of $\vp$. Let $s_{k_1}$ and $s_{k_2}$ represent the sums of the top $k_1$ and $k_2$ elements in $\vp$, respectively. Define $j$ as the index of the $k_1$-th largest element in $\vp$. Since $j \in B_1$, we have $\vw^{(1)}_j \ge 0$, and since $j \notin B_2$, it follows that $\valpha^{(2)}_j \ge 0$. Therefore,
\begin{align*}
    & \vw^{(1)}_j \ge 0 \\
    \Rightarrow~& \vp_j + \frac{n \lambda_1 - s_{k_1}}{k_1} \ge 0 \\
    \Rightarrow~& n \lambda_1 \ge s_{k_1} - k_1 \vp_j = \sum_{i \in B_1} (\vp_i - \vp_j)
\end{align*}
and
\begin{align*}
    & \valpha^{(2)}_j \ge 0 \\
    \Rightarrow~& \vp_j + \frac{n \lambda_2 - s_{k_2}}{k_2} \le 0 \\
    \Rightarrow~& n \lambda_2 \le s_{k_2} - k_2 \vp_j = \sum_{i \in B_2} (\vp_i - \vp_j)
\end{align*}
Observe that $\sum_{i \in B_2} (\vp_i - \vp_j) \le \sum_{i \in B_1} (\vp_i - \vp_j)$ by definition of $\vp_j$, leading to the inequality $n \lambda_2 \le n \lambda_1$, which contradicts our initial assumption that $\lambda_1 < \lambda_2$.

This contradiction confirms that as the regularization parameter $\lambda$ increases, the solution becomes progressively denser. Specifically, at $\lambda = 0$, the solution concentrates all weight on the largest element of $\vp$ to minimize the objective, whereas in the limit as $\lambda \rightarrow \infty$, the regularization dominates, resulting in $\vw = \vone$.

\section{First- vs Second-Order \methodname{}}
\label{apx:first-vs-second-order}
Recall the robust Objective \ref{eq:robust-obj}
\begin{equation}
\vw^* = \argmin_{\vw}~ f(\vw;\vtheta) + \frac{\lambda}{2} \|\vw\|_2^2,~~s.t.~ \left\{\begin{matrix} \vw \ge 0 \\ \vw^T \vone = |\dS| \\ \end{matrix}\right.
\label{eq:robust-obj2}
\end{equation}
where,
\begin{equation}
\label{eq:f2}
f(\vw;\vtheta) \mathbin{=} -\vp(\vtheta)^T \vw + \frac{\eta}{2} \vw^T \mQ(\vtheta) \vw
\end{equation}
In this section, we compare the first-order term $T_1 = \vp(\vtheta)^T \vw$ with the second-order term $T_2 = \frac{\eta}{2} \vw^T \mQ(\vtheta) \vw$. To do so, we sample 128 random examples from the Tulu V2 dataset \citep{tulu2} as the source dataset, and 4 examples from either GSM8k \citep{gsm8k} or MMLU \citep{mmlu1, mmlu2} as the target dataset.

We compute the vectors $\vp$ and the matrices $\mQ$ exactly for the Qwen-2.5 1.5B model \citep{qwen2.5}, using Hessian-vector products to obtain $\mQ$. We then evaluate both $T_1$ and $T_2$ using default weights $\vw = \vone$ and a range of learning rates. To measure the relative contribution of the second-order term, we report the ratio $\left|\frac{T_2}{T_1}\right|$.

As shown in Figure~\ref{fig:second_vs_first_order}, the second-order term is generally negligible for practical learning rates ($\eta \le 10^{-4}$), indicating that the first-order approximation is sufficient in this setting.

\begin{figure}[t] % The * makes it span both columns
    \centering
    \includegraphics[width=0.75\linewidth]{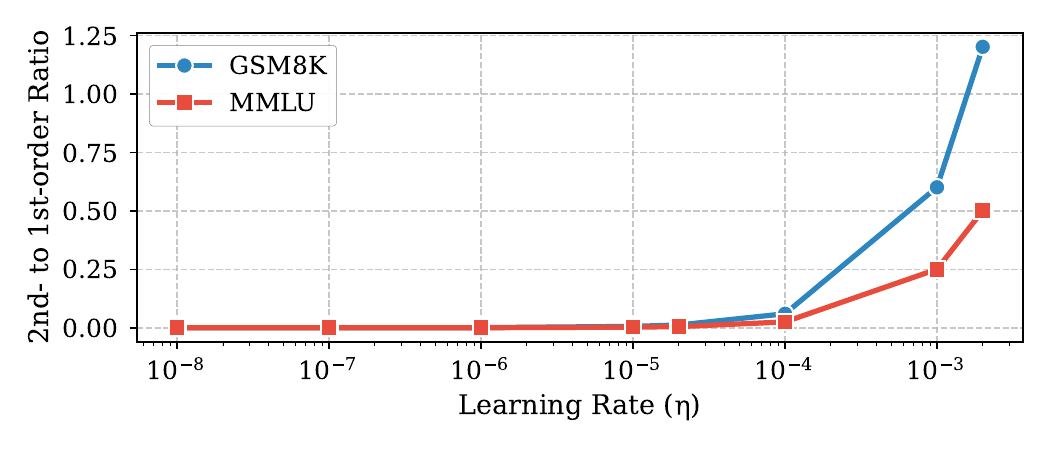}
    \caption{Ratio of second- to first-order terms for Qwen-2.5 1.5B across learning rates on two target datasets.}
    \label{fig:second_vs_first_order}
\end{figure}

\section{Projection Details}
\label{apx:proj}
While in some of our lower-cost experiments we employ Rademacher-based projections—including projecting JVP embeddings to a 4096-dimensional space using this method, as supported on GPUs by \citet{trak}—we find that projecting the landmark gradients with Rademacher projections becomes a computational bottleneck. To address this, we instead use a combination of pre-masking and Randomized Hadamard Transform-based projections, as described below.

\paragraph{Hadamard-based Projection.}
Given a high-dimensional gradient vector $\vg$, we first pad it with zeros to the nearest power of two, $2^k$. Then, we apply a random sign ($\pm 1$) to each element. The signed vector is reshaped into a matrix $\matr{X}$ of dimensions $m = 2^{\left\lceil \frac{k}{2} \right\rceil}$ and $n = 2^{\left\lfloor \frac{k}{2} \right\rfloor}$. We then apply Hadamard transforms from both sides: $\mH_m^T \matr{X} \mH_n$. The resulting matrix is flattened, and a random subset of its entries is selected as the projected vector.

Importantly, both the random sign patterns and the final index subset are generated once and reused across all projected vectors. This ensures consistency and enables meaningful comparison. The left and right Hadamard transforms are highly efficient and provide strong mixing across rows and columns.

\paragraph{Pre-masking.}
Although efficient GPU implementations of the Hadamard transform exist \citep{hadacore, tridao}, they support transforms up to dimension $2^{15} = 32{,}768$. This allows us to efficiently project vectors of up to $2^{30} = 1{,}073{,}741{,}824$ elements—just over one billion. However, the full gradients of large language models (LLMs) can exceed this size.

To address this, we apply \emph{pre-masking}: we randomly select one billion elements from the gradient vector before projection. For LLaMA-2 7B \citep{llama2}, we select these elements from the \texttt{down\_proj} matrices, which we find to represent the overall gradients well. For smaller models, we randomly sample one billion elements from the entire gradient vector.

\section{Weighted Training Loss}
\label{apx:weighted-loss}

\begin{table*}[t]
\centering
\caption{Accuracy ($\pm$ standard deviation) of Llama2-7B across six tasks when using \methodname{} to select 10k samples from a pool of 200k in the Tulu V2 dataset~\citep{tulu2}, with and without loss weighting during training. The number of landmarks is fixed at 8192.}
\scalebox{0.65}{
\begin{tabular}{l|lcccccc|c}
\toprule
\textbf{Model} & \textbf{Embedding} & \textbf{MMLU} & \textbf{GSM8k} & \textbf{BBH} & \textbf{TyDIQA} & \textbf{CODEX} & \textbf{SQuAD} & \textbf{Avg. $\Delta$ w/ Uniform} \\
\midrule
\multirow{2}{*}{Llama2-7B} 
    & Weighted  & 47.8 $\pm$ 0.16 & 19.5 $\pm$ 0.06 & 42.3 $\pm$ 0.26 & 52.2 $\pm$ 1.38 & 27.0 $\pm$ 2.53 & 84.4 $\pm$ 0.48 & +1.48\\
    & Not Weighted  & 48.2 $\pm$ 0.35 & 19.6 $\pm$ 0.79 & 42.4 $\pm$ 0.14 & 52.7 $\pm$ 1.67 & 29.3 $\pm$ 1.27 & 83.4 $\pm$ 0.86 & +1.93 \\
\bottomrule
\end{tabular}
}
\label{tab:weighted-loss}
\end{table*}

In this section, we investigate the effect of incorporating the weights derived by \methodname{} into the training loss. Specifically, we conduct an experiment using LLaMA-2 7B \citep{llama2}, with a pool size of 200k and 8192 landmarks sampled from Tulu V2 \citep{tulu2}. During training, we scale the loss of each selected sample by its corresponding weight.

Table~\ref{tab:weighted-loss} compares this weighted training setup with a baseline where the weights of the selected samples are ignored. The results show that incorporating weights during training does not improve performance—and in some cases, it may even degrade it. This may be due to some samples having near-zero weights, effectively pruning them from the training process.

\section{Differed Figures}

\begin{figure}[t]
    \centering
    \includegraphics[width=\linewidth]{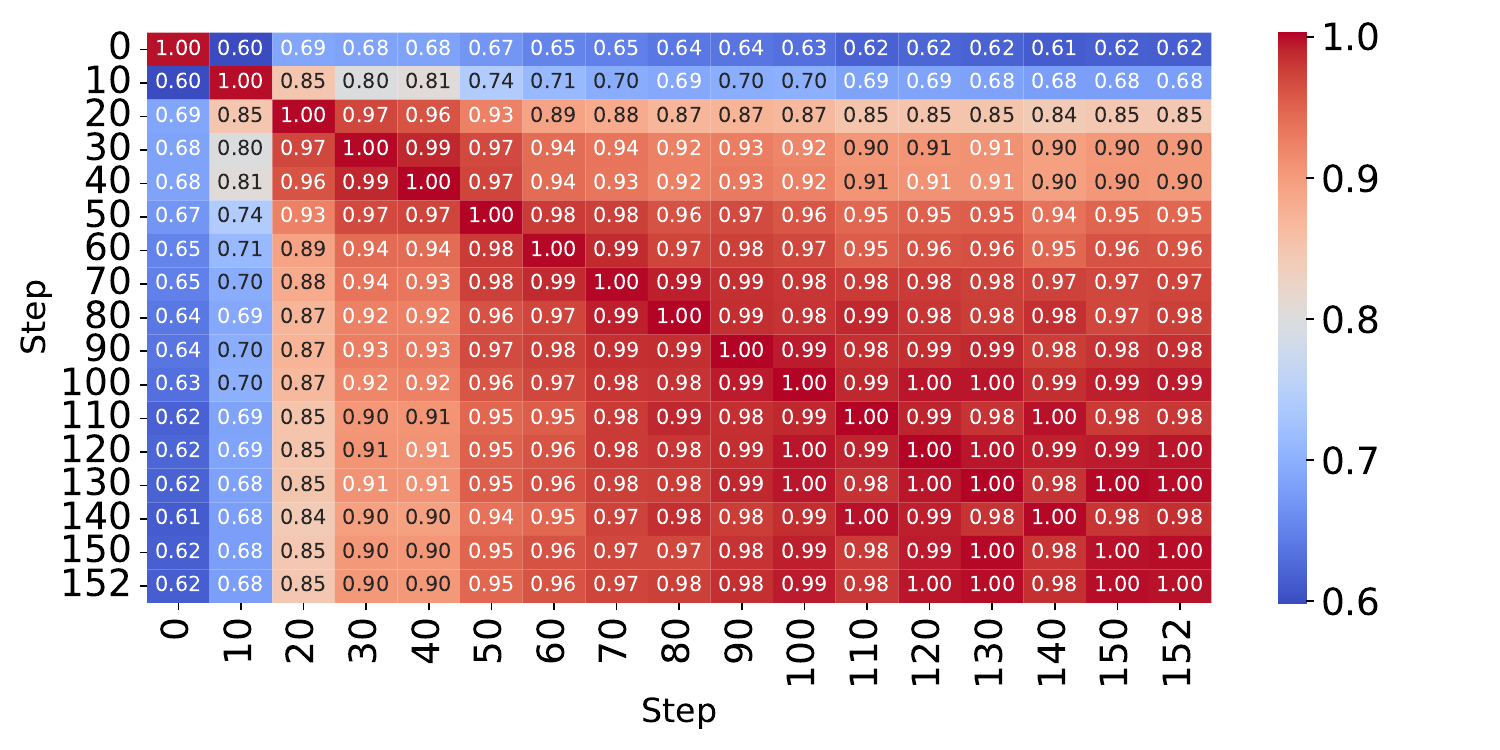}
    \includegraphics[width=\linewidth]{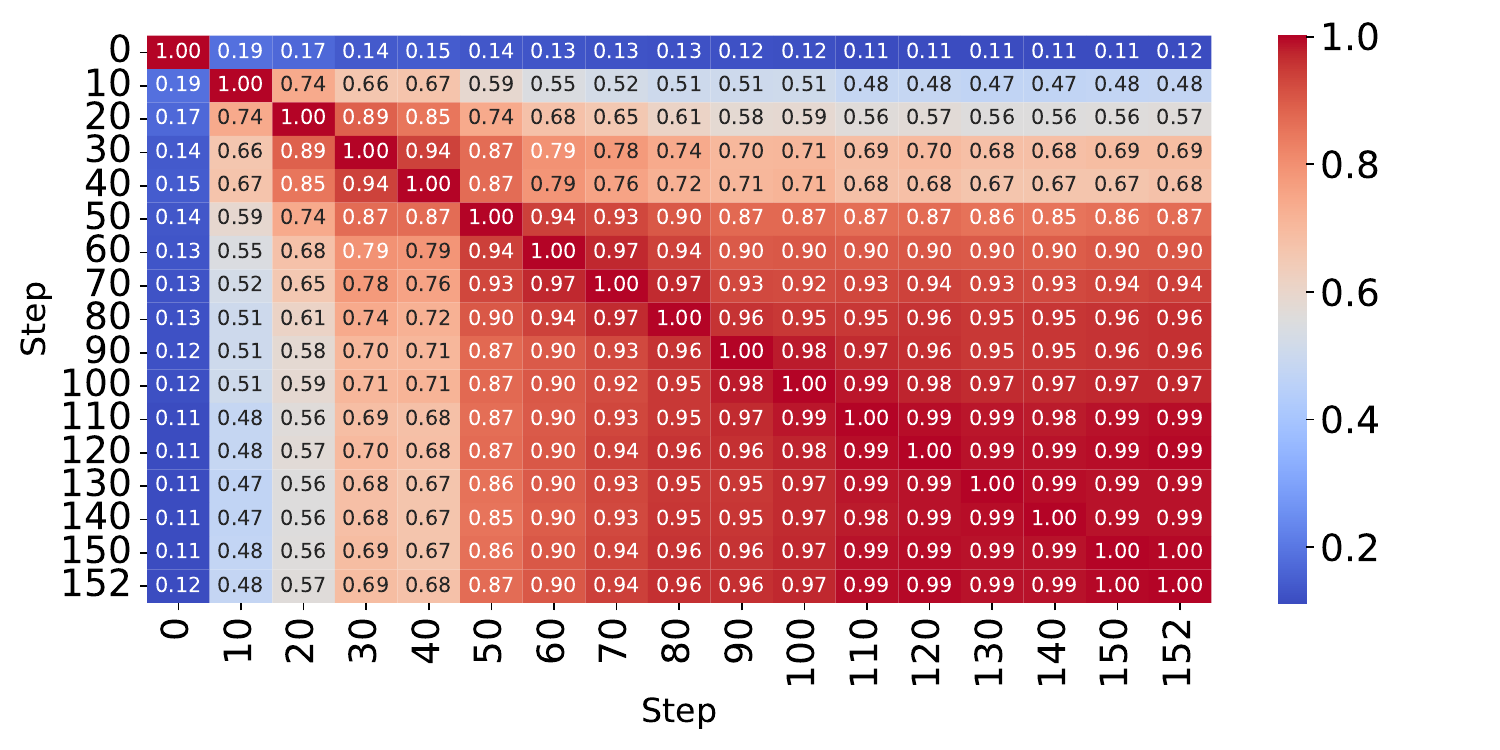}
    \caption{Average gradient cosine similarity on unseen samples from GSM8k (top) and SQuAD (bottom) across checkpoints.}
    \label{fig:grad-sim-other}
\end{figure}

\end{document}